\theoremstyle{plain}
\newtheorem{theorem}{Theorem}
\newtheorem{assumption}{Assumption}
\newtheorem{lemma}{Lemma}
\algrenewcommand\ALG@beginalgorithmic{\normalsize}
\pgfplotsset{every axis/.append style={
                    title style={font=\huge},
                    label style={font=\huge},
                    tick label style={font=\huge},
                    legend style={font=\huge}  
                    }}
\title{
Concept Matching: Clustering-based Federated Continual Learning
}
\author {
    Xiaopeng Jiang (xj8@njit.edu),
    Cristian Borcea (borcea@njit.edu)
\\
Department of Computer Science, New Jersey Institute of Technology, Newark, NJ, USA
}
\begin{document}


\maketitle

\begin{abstract}

Federated Continual Learning (FCL) has emerged as a promising paradigm that combines Federated Learning (FL) and Continual Learning (CL). To achieve good model accuracy, FCL needs to tackle catastrophic forgetting due to concept drift over time in CL, and to overcome the potential interference among clients in FL. We propose Concept Matching (CM), a clustering-based framework for FCL to address these challenges. The CM framework groups the client models into concept model clusters, and then builds different global models to capture different concepts in FL over time. In each round, the server sends the global concept models to the clients. To avoid catastrophic forgetting, each client selects the concept model best-matching the concept of the current data for further fine-tuning. To avoid interference among client models with different concepts, the server clusters the models representing the same concept, aggregates the model weights in each cluster, and updates the global concept model with the cluster model of the same concept. Since the server does not know the concepts captured by the aggregated cluster models, we propose a novel server concept matching algorithm that effectively updates a global concept model with a matching cluster model. The CM framework provides flexibility to use different clustering, aggregation, and concept matching algorithms. The evaluation demonstrates that CM outperforms state-of-the-art systems and scales well with the number of clients and the model size.

\end{abstract}

\section{Introduction}

As a privacy-preserving deep learning (DL) paradigm, Federated Learning (FL)~\citep{mcmahan2017communication} attracts significant interest. However,
most of the current FL research assumes the data have been collected before training, and the data at clients do not change over the training rounds. This is not necessarily the case in many applications, especially on IoT and mobile devices where it is difficult to train with the entire dataset on-device at every round due to their resource constraints. The data not only accumulate over time, but also change their distributions. The data distribution change, also referred to as concept drift, makes DL models obsolete over time. For example, a user sleep quality prediction model trained on the data collected during their routine life will not work well when the users experience changes in their sleep patterns due to stress, illness, or travel. 
This effect of dynamic change in data is being actively studied by the Continual Learning (CL) community in centralized settings. However, CL research in FL settings is still in its infancy. 

Federated Continual Learning (FCL) performs FL under the CL dynamic data scenarios. 
There are two main challenges in FCL. 
One, inherited from CL, is catastrophic forgetting~\citep{french1999catastrophic}. Due to concept drift, the model forgets previously learned knowledge as it learns new information over time. 
For example, in human activity recognition (HAR)~\citep{jiang2022flsys}, the concepts can be the subsets of activities, the locations of the activities, or the health status of the user.   
FCL imposes privacy constraints on the top of CL, which escalates this challenge. Even if the clients are aware of concept drift (e.g., sedentary vs. active lifestyle in HAR), they may not want to reveal it to the FL server due to privacy concerns. The second challenge is that the FL clients with different data concepts may potentially interfere with each other, because the data in FL is typically non independently or identically distributed (non-iid). The interference will sabotage the efforts of clients' training during aggregation and lead to underperforming global models. CL amplifies this interference in FL, because the union of the clients data may also be distributed differently over time. 

An efficient FCL framework shall tackle these challenges to achieve good model performance. 
So far, no existing system has achieved this goal under realistic assumptions. 
While several works~\citep{yoon2021federated, casado2022concept, guo2021towards,zhang2023addressing,qi2023better,dong2022federated,ma2022continual} have recently targeted FCL, their applicability is limited due to unrealistic assumptions (i.e., the server knows the concept drift from the clients or the classes to learn do not change over time), or the interference among the clients is not handled. 

This paper proposes {\bf Concept Matching (CM)}, the first framework for FCL to alleviate the two challenges and achieve good model performance. 
Intuitively, if we can separate the client models based on the concepts of the data, and train different concept models specifically to learn each concept in the FL training rounds, catastrophic forgetting and the interference among clients can be greatly diminished. This process has to be performed under the FL assumption that the server cannot access any raw data. The CM framework achieves these goals through {\bf clustering and concept matching} in FL. 
At every CM training round, to avoid interference among the clients, the server clusters the client models representing the same concept and aggregates them. 
To mitigate catastrophic forgetting, different concept models are trained for each concept through concept matching which occurs differently at the server and the clients.
The server concept matching is to match and update the concept model of the previous round with an aggregated cluster model.
We propose a novel distance-based concept matching algorithm for the server concept matching. 
This algorithm matches an aggregated cluster model with a concept model close in distance, and aligns them to update the concept model in the appropriate gradient descent direction.
The client concept matching is to test the concept models from the previous round on the current local data, and to select the one with the lowest loss as the best match. 
The CM framework does not require the clients to have any knowledge about the concepts. Furthermore, the server does not need any additional information when compared to vanilla FL (i.e., it only requires the model weights from the clients). The CM framework provides flexibility to use a variety of clustering, aggregation, and concept matching algorithms. The framework can evolve as new algorithms are proposed for different applications and models.
We proved theoretically that during the iterations of gradient descent, the distance between the current model and the model from the previous round gradually diminishes. Our server concept matching algorithm applies this theorem and achieves up to 100\% accuracy. Furthermore, we experimentally demonstrated the superior performance of CM over the baselines~\citep{yoon2021federated,french1999catastrophic},
the flexibility of using different algorithms, and its scalability with the number of clients and the model size.

\section{CM Overview}
\label{sec: overview}

CM is an effective learning framework for FCL, where each client encounters a stream of data with different concepts over time (Figure~\ref{scenario}). A concept infers a function from training examples of its inputs and outputs~\citep{mitchell1997machine}, such as a set of classes in image classification or an accent in speech recognition. The clients may or may not be aware of the concepts of the data. In FCL, it is infeasible for the clients to store all the data (e.g., limited storage on IoT devices) or the system cannot wait for all data to be accumulated before the training starts~\citep{lomonaco2017core50}. The system has to consume the data promptly to train one or multiple models working well for every concept. As regular FL, the server in FCL only receives and aggregates the model weights from the clients, without accessing any additional information. 

\begin{figure*}[t!]
  \centering
   \subfloat[\begin{scriptsize}FCL scenario\end{scriptsize}]{\label{scenario}
    \includegraphics[width=0.44\linewidth]{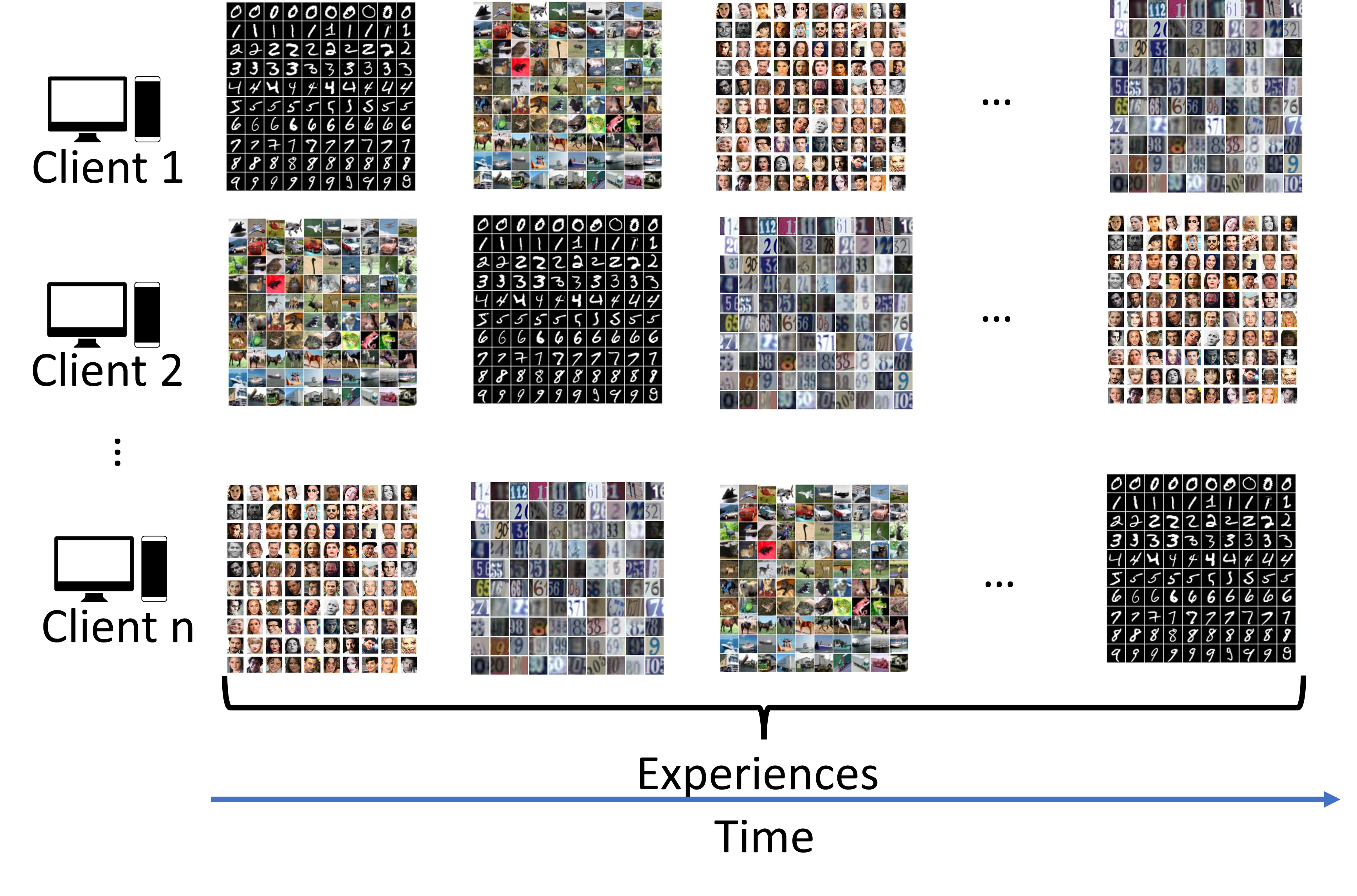} }\hfill
    \subfloat[\begin{scriptsize}CM training process overview\end{scriptsize}]{\label{overview}
    \includegraphics[width=0.53\linewidth]{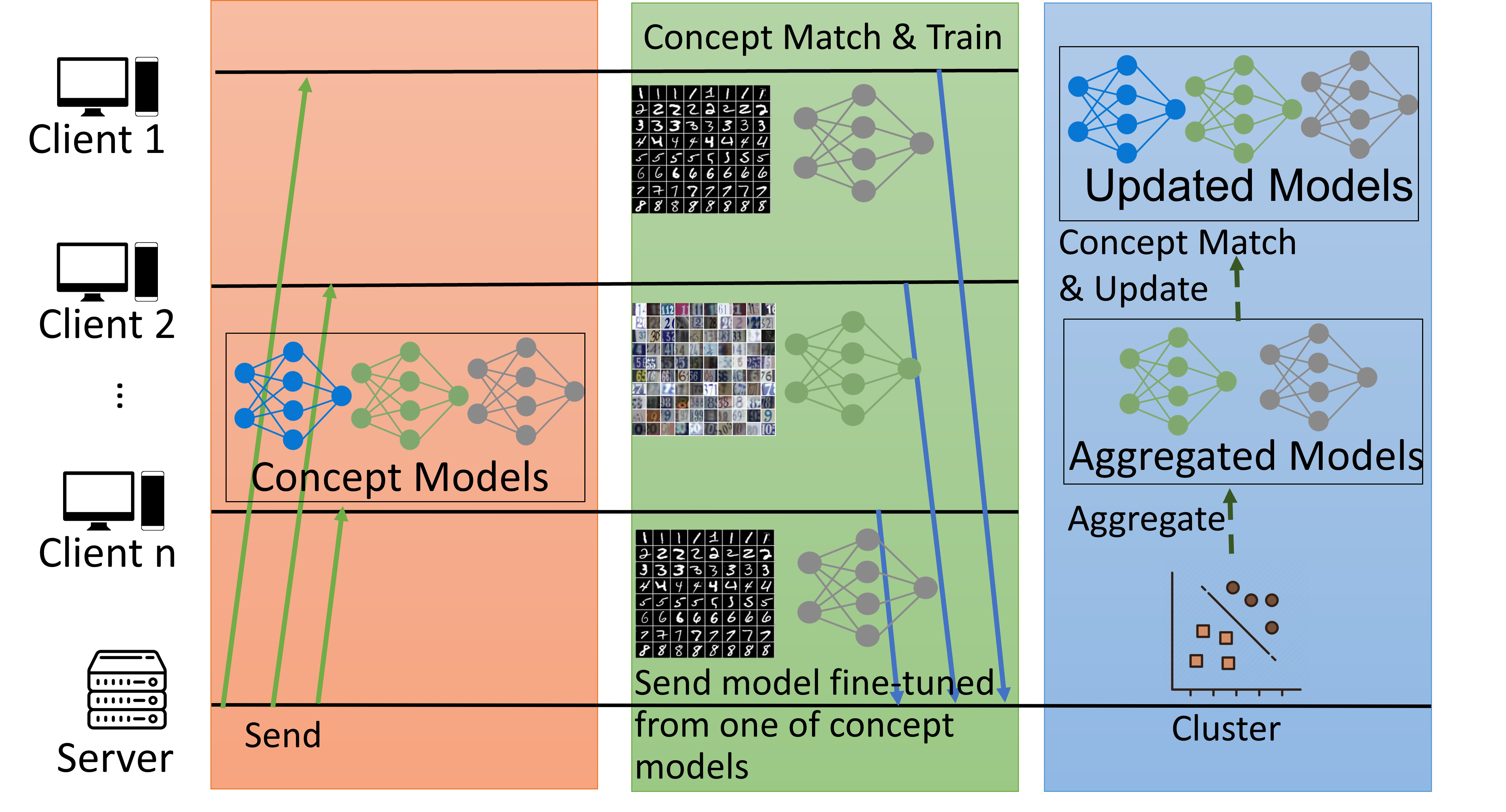} }
  \caption{CM training for an FCL scenario} 
  \label{setting}
  \vspace{-10pt}
\end{figure*}

There are two learning scenarios~\citep{kim2022continual,maltoni2019continuous} in FCL: class-incremental and task-incremental. Class-incremental requires the system to learn different classes over time. The changes of the sets of classes in data causes concept drift, but the system is not given any additional information to identify the concept drift. Task-incremental focuses on learning different tasks over time, such as different languages in speech recognition. This scenario can also consider the tasks to be different sets of classes. In this case, the difference from class-incremental is that the tasks (e.g., sedentary activities vs. physical exercises in HAR) are known, and the concept drift is recognized by the model. In centralized CL, the task IDs can be used to help both training and inference. However, in FCL, the clients can utilize the task IDs, but they cannot share them with the server due to privacy concerns. 

The FCL learning scenarios are negatively impacted by catastrophic forgetting due to concept drift over the training rounds and the interference among the clients with different concepts of data.
CM is the first learning framework to alleviate these challenges and effectively train accurate models in FCL under realistic assumptions. 
The main idea of CM is to separate the client models representing the same concept, and the server and the clients collaboratively train different concept models for each concept by matching the concept concealed in the data and the model. 
In the CM framework, the clients do not send any addition information beyond the model weights to the server, and thus the framework works for both class-incremental and task-incremental scenarios.

Figure~\ref{overview} illustrates the CM training in one FCL round. In the bootstrapping phase, the system administrator at the server-side determines the input and output of the model, the estimated number of concepts in the data, and designs the neural network.
In the initialization phase (\textbf{orange box}), the server transmits the global concept models to all clients.
To avoid catastrophic forgetting, 
in the client operation phase (\textbf{green box}), each client utilizes its data in the current round to perform concept matching and chooses a best-matching model for further fine-tuning. Subsequently, the updated models are sent back to the server.
To avoid interference among client models with different concepts, 
in the server operation phase (\textbf{blue box}), the server clusters the models with the same concept and aggregates them. Due to the fact that the union of the clients data in each round may not encompass all concepts, the number of clusters may not correspond to the number of concepts. The server does not know the concept captured by an aggregated cluster model.
Thus, the server must perform concept matching to associate the aggregated cluster models with the global concept models of the previous round, and only update the relevant concept models for the next round. 

The number of concepts is typically a small constant estimated from the semantics of the application, and the system administrator can select its value using domain knowledge. For example, a HAR model can use the locations of the user activities as different concepts, such as home, workplace, park, etc. Image data, on the other hand, can use the number of categories as the number of concepts. 
If an unexpected new concept occurs, by design, the system will automatically treat the new concept as the most similar existing concept. 

The CM framework uses different concept matching algorithms at the server and the clients. 
In the server concept matching algorithm, for each global concept model, the server maintains a distance record between the current model and the model of the previous round. The server then matches the cluster aggregated model with a global concept model such that their distance is smaller than the record. This is to ensure the global concept model will be updated in the correct gradient descent direction. If there are multiple qualifying global concept models, the server selects the one closest in distance to the cluster model. It then updates the global concept model with the matching cluster model.
For the client concept matching, an efficient algorithm tests the local data with the concept models and chooses the model with the lowest loss. 

\section{Related Work}
\label{sec:related} 

Most of the works on generic FL focus on system design~\citep{jiang2022flsys, mcmahan2017communication,beutel2020flower}, model performance~\citep{zhang2022fine, gao2022feddc, jiang2023zone,huang2022learn}, privacy~\citep{geiping2020inverting,wei2020federated}, and communication and computation overhead~\citep{jiang2023complement,wu2020safa, luping2019cmfl}. 
This paper does not focus on these generic FL challenges. 
Relevant for this paper are the works~\citep{ouyang2022clusterfl, ghosh2020efficient} on clustering the client models to improve the model accuracy. These clustering approaches in FL assume the number of client groups is a constant throughout the training, and cannot be applied directly in CL scenarios. 
All the works mentioned here assume the training data for the clients do not change over time, which limits the applicability of FL. Our work, on the other hand, focuses on making FL work well in the presence of dynamic changes of the concepts in data. 

Continual Learning (CL), also known as lifelong learning or incremental learning, allows an AI model to learn continuously over time from a stream of data, while avoiding catastrophic forgetting. 
Recent works addressing CL can be categorized into three families~\citep{de2021continual}: replay~\citep{rolnick2019experience,isele2018selective}, regularization~\citep{ahn2019uncertainty,aljundi2019task} and parameter isolation~\citep{mallya2018packnet,serra2018overcoming}.
These techniques do not address additional challenges from FL. In addition to its distributed nature, FL also introduces privacy restrictions. For example, FL clients shall not share their task IDs with the server in the task-incremental CL scenario. In addition, even if the clients can learn new concepts well without forgetting the previous ones, the aggregation may sabotage the efforts of the clients when their learning paths diverge due to non-iid data.
This phenomenon has been demonstrated experimentally with image data in a recent work~\citep{yoon2021federated}.
Expanding CL to FL, our work adheres to the FL requirement that the server only accesses the client model weights, and it handles the interference among the clients in FL. 

Federated Continual Learning (FCL) is a newly introduced research area that combines FL and CL.
FedWeIT~\citep{yoon2021federated} and CFeD~\citep{ma2022continual} only work in the task-incremental scenario, and they requires the server to know the task IDs from the clients.  
CDA-FedAvg~\citep{casado2022concept} only considers the context information (i.e. the positions of sensors in HAR) as the concepts, and it is not proven to work with concept drift caused by different sets of classes. 
Other works~\citep{guo2021towards, dong2022federated, qi2023better, zhang2023addressing} do not address the potential interference among the clients.
Unlike these works, our work tackles catastrophic forgetting and the interference among the clients under more realistic assumptions, such as the clients do not share any additional information with the server beyond the model weights, and the classes can change over time.

\section{CM Framework}
\label{sec:methodology} 

This section describes the problem definition, the CM learning framework, several design choices, and the concept matching algorithms. 

\subsection{Problem Definition}

For each client $n \in \{1, 2, ..., N\}$, the data arrives in a streaming fashion as a (possible infinite) sequence of learning experiences $\mathcal{S}_n = {e_{n}^{1}, e_{n}^{2}, ..., e_{n}^{t}}$.  Without loss of generality, each experience $e_{n}^{t}$ consists a batch of samples $\mathcal{D}_{n}^{t}$, where the $i$-th sample is a tuple \textlangle$x_i, y_i$\textrangle$_n^t$ of input and target respectively. 
Let $\mathcal{C} = \{C_1, C_2, ..., C_k\}$ be the set of $K$ concepts hidden in entire dataset $\mathcal{D}$.  Each concept $C_k$ is associated with a probability distribution $P_k(X,Y)$, where $X$ denotes the input space and $Y$ denotes the label space. A batch of client samples follows one of the distributions $\mathcal{D}_{n}^{t}\sim P_k(X,Y)$, which may or may not be explicitly known by the client. 

The goal of the CM framework is to learn a set of models $\{w_k\}_{k=1}^{K}$, and each model $w_k$  can perform well for its corresponding concept $C_k$. 
The problem can be formulated as the Eq.~\ref{eq:opt},
where $L$ is the loss function, $\mathcal{D}_{n}$ is the entire stream of data on client $n$.

\begin{equation}
\label{eq:opt}
\arg\min_{\{w_k\}_{k=1}^{K}}\sum_{k=1}^{K}\sum_{n=1}^{N}L(w_k,\mathcal{D}_{n})
\end{equation}

\subsection{Learning Framework for FCL}
To begin the learning process, the system administrator at the server side determines the number of concepts $K$ and designs the model. The server initializes the weights of $K$ global concept models randomly and sends them to the clients. 
In every round, each client $n$ receives the weights of global concept models $W^{t-1} = (w_1,w_2,...,w_K)^{t-1}$ from the server. To avoid catastrophic forgetting caused by training a model with the data of different concepts, the clients perform concept matching with the local data of current round to select the best-matching global concept model as Eq.~\ref{eq:ccm}. 

\begin{equation}
\label{eq:ccm}
k^* = ClientConceptMatch(W^{t-1}, \mathcal{D}_{n}^{t})
\end{equation} 

Next, the client fine-tunes the best-matching global concept model weights $w_{k^*}$ with the local data, and produces a new local model with weights $\theta^{t}_{n}$ and sends it the server. 
After receiving the client models with weights $\{\theta^{t}_{n}\}_{n=1}^N$, to separate the clients models with the different concepts, the server clusters the client models into a set of clusters of size $J$, denoted as $\Omega^t$ (Eq.~\ref{eq:cl}). Since the union of clients data per round may not cover all concepts, $J$ and $K$ are usually different. 

\begin{align}
\Omega^t = Cluster(\{\theta^{t}_{n}\}_{n=1}^N)  \label{eq:cl} \\
 \Theta^t = Aggregate(\Omega^t)  \label{eq:ag}
\end{align}

Then, the server produces aggregated cluster models with weights $\Theta^t = (w_1,w_2,...,w_J)^t$ (Eq.~\ref{eq:ag}). The server does not know the concept in a aggregated cluster model or which global concept model to update. Therefore, it need match the aggregated cluster models $\Theta^t$ with the global concept models $W^{t-1}$, and only update the global concept models with data encountered in this round as Eq.~\ref{eq:scm}.

\begin{equation}
W^{t} = ServerConceptMatch(\Theta^t,W^{t-1}) 
\label{eq:scm}
\end{equation} 

The pseudo-code of the CM framework is described in Appendix~\ref{app:alg1}.

\subsection{Design Discussion}
\label{sec:diss} 

The CM framework provides the flexibility to use different clustering, aggregation, and concept matching algorithms. It can evolve, as new algorithms are proposed for different applications and models. The aggregation algorithms in FL are orthogonal to the CM framework, and any of them can be employed to further mitigate non-iid. We evaluate the CM framework with multiple clustering algorithms in section~\ref{sec:results}, and demonstrates that CM works well with classic clustering algorithms, such as kmean, agglomerative, and BIRCH. 
Some clustering algorithms, such as DBSCAN~\citep{ester1996density} and OPTICS~\citep{ankerst1999optics}, do not require the number of clusters to be known. As a future work, the CM framework may be extended to work in such a scenario. 
In addition, dimension reduction techniques~\citep{carreira1997review} can also be applied in conjunction with clustering algorithms to mitigate the curse of dimensionality.

The CM framework is designed to be compatible with both task-incremental and class-incremental scenarios, because the clients do not need to possess any prior understanding of concept drift in data, including task IDs. In the task-incremental scenario where the clients know the task IDs and the concept drift due to the transition of different tasks, the clients do not have to perform concept matching every round. The clients can maintain a mapping between the global concept model ID and the task ID. After the client encounters all tasks, it can just use the mapping to match the data of a task with the global concept model. 
In the class-incremental scenario, the clients have to perform concept matching every round. 

In terms of privacy, the CM framework is the same as vanilla FL (i.e., the clients send only their model weights to the server). 
In terms of communication, the clients send a single model trained with the local data in the same way as vanilla FL, but the server sends multiple concept models to the clients. The number of concepts is usually a small constant under the control of the system administrator, and the concept models designed in CM can be smaller than the single model in vanilla FL, because each model only learns a single concept. Nevertheless, it is essential to balance the trade-off between model performance and communication overhead, by taking into account the available system resources. Thus, to further improve privacy protection and communication efficiency, CM can use existing privacy protection~\citep{mothukuri2021survey} or communication reduction~\citep{deng2020model} techniques for FL.

\subsection{Concept Matching Algorithms}
\label{sec:cm} 

The concept matching algorithms at the client and the server collaboratively and iteratively update each global concept model with the information learnt from the data of a matching concept, and achieve up to 100\% accuracy. Two algorithms are connected through client training, and server clustering and aggregation. The main novelty of this distributed approach lies in the server concept matching algorithm, which ensures the model updates in the correct gradient descent direction.  

\textbf{Server Concept Matching.} After clustering, the groups of the client models fine-tuned with the data of different concepts are unordered, and the number of clusters may not be the same as the total number of concepts because the union of the clients' data in the current round may not cover all concepts. The server does not know how to update the global concept models without the matching between the aggregated cluster models and the global concept models from previous round. 

To resolve this challenge, we propose a novel distance-based server concept matching algorithm. This algorithm not only updates a global concept model with a close cluster model in distance, but also ensures the update in the correct gradient descent direction. Our algorithm can use different distance metrics. For a normal size neural network such as LeNet~\citep{lecun1998gradient}, Manhattan or Euclidean distance can be employed for their low computational complexity. For larger neural networks, the dimension reduction techniques~\citep{carreira1997review} can be incorporated to mitigate the curse of dimensionality. 

The pseudo-code of server concept matching algorithm is shown in Algorithm~\ref{server_cm}. The algorithm requires a global record of the distance between each global concept model and the corresponding previous global concept model (line 3).
For each aggregated cluster model (line 4), the algorithm tracks its best-matching candidate (line 5) and its distance from the best-matching candidate (line 6). 
Each aggregated cluster model is compared with each global concept model (line 7) by computing their distance (line 8). 
If their distance is smaller than both the global distance record of the corresponding concept $k$ and the distance from previous matching candidate (line 9), we consider them a better match (line 10) and update the distance between the cluster model and its matching candidate (line 11). After checking all the global concept models, the algorithm updates the best-matching global concept model with the aggregated cluster model (line 12), and also updates the global distance record with the distance between the best-matching pair (line 13). 

This algorithm is theoretically grounded. Intuitively, in a gradient descent-based learning algorithm, as the learning curve becomes flatter over iterations, the learning progress slows down. Therefore, the distance between the current model and the model of the previous iteration becomes smaller.
Theorem~\ref{th} formulates this intuition, and Algorithm~\ref{server_cm} utilizes this theorem. By tracking and comparing with the distance record, Algorithm~\ref{server_cm} updates each concept model with a matching cluster model only when their distance becomes smaller than the current distance. 
The proof of Theorem~\ref{th} demonstrates theoretically that Algorithm~\ref{server_cm} updates the global concept model with a matching cluster model in the correct gradient descent path.
This allows the concept models to learn over time without interference from other concepts.

\begin{algorithm}[h]
    \scriptsize
	\caption{Server Concept Matching Pseudo-code~\label{server_cm}} 
	\begin{algorithmic}[1]
 
     	    \Procedure{ServerConceptMatch}{$W,\Theta$}
          \State //  Executed at Server
              \State require $distRecord$ of size $K$ as the global record of distance between each global concept model and the corresponding previous global concept model
                \For{each aggregated cluster model $w_j \in \Theta$} 
                    \State $candidate \gets null$ 
                    \State $candidateDist \gets \infty$
                    \For{each global concept model $w_k \in W$} 
                        \State $tmpDist \gets Distance(w_j,w_k)$
                    
                        \If{$tmpDist<distList[k]$ and $tmpDist<candidateDist$}
                            \State $candidate \gets k$
                            \State $candidateDist \gets tmpDist$
                            \EndIf
                    \EndFor
                    \State $W[candidate] \gets w_j$
                    \State $distRecord[candidate] \gets candidateDist$
                \EndFor
                 \State \Return $W$
	    \EndProcedure
	\end{algorithmic} 
\end{algorithm} 


\begin{assumption}
Differentiability: The loss function $L(w)$, used to optimize a neural network, is differentiable with respect to the model parameters $w$.
\label{ass_d}
\end{assumption}



\begin{assumption}
Lipschitz continuity: The gradient of the loss function $\nabla L(w)$ is Lipschitz continuous with a positive constant \text{L}. By Lipschitz continuity definition, for any two points $w^1$ and $w^2$, the following inequality holds $\lVert \nabla L(w^1) - \nabla L(w^2)\rVert \leq \text{L} \lVert w^1 -w^2 \rVert$, 
where  $\lVert.\rVert$ denotes the norm.
\label{ass_l}
\end{assumption}

\begin{theorem}
Given a loss function $L(w)$ under assumptions~\ref{ass_d} and ~\ref{ass_l}, $w$ is updated with gradient descent  $w^{t+1} = w^t - \eta \nabla L(w^t)$, where $t$ is the iteration number, $\eta$ is the learning rate, and $\nabla L(w^t)$ is the gradient of the loss function with respect to $w^t$, the following inequality holds $\lVert w^{t+1} -w^{t}\rVert<\lVert w^{t} -w^{t-1}\rVert$.
 \label{th}
\end{theorem}

Theorem~\ref{th} is under Assumption~\ref{ass_d} Differentiability  and Assumption~\ref{ass_l} Lipschitz continuity. The proof of the theorem is in Appendix~\ref{app:th}.
Without requiring strong assumptions, such as convexity, Assumption~\ref{ass_d} and~\ref{ass_l} can be applied to most loss functions for neural networks. In the theorem, gradient descent is also the prevalent algorithm to update neural networks. 
Therefore, this theorem can be applied to the general optimization process of most neural networks. 

\textbf{Client Concept Matching.} The clients receive the global concept models from the server every round, and each model shall learn the data distribution of each concept. The clients need to select one of the global concept models, and fine-tune it with the data of the current round.
Since the clients may or may not be aware of the concepts, they shall perform client concept matching to match the concept of the current data with a global concept model. 

At round $t$, a client $n$ can test the global concept models of the previous round $\{w_{k}^{t-1}\}_1^K$ on its current local data  $\mathcal{D}_n^t$, and select the $k^*$-th concept model with the smallest loss for further fine-tuning as Eq.~\ref{eq:ccm+}.  Since the data do not accumulate over a given limit in FCL, testing the models is an effective method to select the best-matching global concept model without significant overhead. 

\begin{equation}
\label{eq:ccm+}
k^* = \arg\min_{k} L(w_{k}^{t-1}, \mathcal{D}_{n}^t)
\end{equation}

\section{Evaluation} 
\label{sec:eval}

The evaluation has five main goals: (i) Investigate CM effectiveness; (ii) Compare with the baselines; (iii) Quantify the performance of different clustering algorithms; (iv) Quantify the concept matching accuracy; (v) Investigate the scalabilitiy in terms of the number of clients and the model size. 
In the Appendix~\ref{app:add}, we demonstrate CM has good resilience when configured with numbers of concepts that are different from the ground truth. We also evaluate CM on a Qualcomm QCS605 IoT device, and demonstrate its feasibility in real-world and its low overhead in terms of the client end-to-end operation time. 

\subsection{Experimental Setup}

Similar to a recent FCL work~\citep{yoon2021federated}, we evaluate CM with a ``super'' dataset,  which consists of six frequently used image datasets: SVHN, FaceScrub, MNIST, Fashion-MNIST, Not-MNIST, and TrafficSigns. To simulate different concepts, the ``super'' dataset is split into five concepts. The training, test, and validation split follows 7:2:1. More dataset details are described in Appendix~\ref{app:data}. 

Non-overlapping chunks from the five concept datasets are further distributed randomly to the clients. At every round, the local data across clients are non-IID. Each client encounters one of the five local concept datasets randomly, and uses a cyclic sliding window of 320 samples in the encountered concept dataset. Unless otherwise specified, CM is tested with 20 clients (all clients participate in each training round), kmean clustering algorithm, FedAvg aggregation algorithm, and Manhattan distance for the server concept matching algorithm. 
To compare with the baseline fairly, we use the same CNN-based model as~\citep{yoon2021federated}, detailed in Appendix~\ref{app:model}. The system runs 100 rounds of training for each experiment. More details of Python libraries used, computing equipment and hyper-parameters are described in Appendix~\ref{app:settings}. 

To quantify the overall concept matching accuracy, we evaluate under the class-incremental scenario, and assume the clients are not aware of the concepts and perform the client concept matching every round. 
In task-incremental scenarios, the difference is that the clients do not have to perform concept matching every round, because they know the task IDs and the concept drift due to the transition of different tasks. For the same experimental setting, task-incremental training would achieve the same model performance with lower computation overhead at the clients.

\subsection{Results}
\label{sec:results}

\textbf{Effectiveness of CM in FCL vs. Vanilla FL. } 
Figure~\ref{eff} demonstrates the effectiveness of CM, as the volatile learning curves of vanilla FL prominently illustrate the catastrophic forgetting due to the concept drift and the potential interference among clients. CM can effectively eliminate these negative effects with smooth learning curves for all concepts. The superior performance of CM is more apparent for difficult concepts (RBG images with 50 classes from FaceScrub in Concepts 2 and 3) than in easy concepts (BW images with 30 classes from mixed MNIST in Concept 4). Compared with vanilla FL, the concept model accuracy improvement is up to 17.5\%, and the weighted average over the number of samples for all the concepts is also improved from 86.2\% to 90.3\%. Considering the complexity of the super dataset, this improvement is significant for image classification. 

\begin{figure}[t!]
\centering
\begin{minipage}[b]{0.2\textwidth}
\resizebox{1\textwidth}{!}{%
\begin{tikzpicture}
\begin{axis}[
    title={Concept 1},
    xlabel={Round},
    ylabel near ticks,
    xmin=0, xmax=101,
    ymin=0.1, ymax=0.9,
    legend pos=south east,
    ymajorgrids=true,
    grid style=dashed,
]
\addplot[color=blue,smooth,mark=None,] table [x=round, y=c1_cm, col sep=comma, mark=none, smooth] {data/acc_round.csv};
\addlegendentry{CM}
\addplot[color=red,smooth,mark=None,] table [x=round, y=c1_v, col sep=comma, mark=none, smooth] {data/acc_round.csv};
\addlegendentry{Vanilla FL}
\end{axis}
\end{tikzpicture}}
\end{minipage}%
   \hfill
   \begin{minipage}[b]{0.2\textwidth}
\resizebox{1\textwidth}{!}{%
\begin{tikzpicture}
\begin{axis}[
    title={Concept 2},
    xlabel={Round},
    ylabel near ticks,
    xmin=0, xmax=101,
    ymin=0., ymax=0.9,
    legend pos=south east,
    ymajorgrids=true,
    grid style=dashed,
]
\addplot[color=blue,smooth,mark=None,] table [x=round, y=c2_cm, col sep=comma, mark=none, smooth] {data/acc_round.csv};
\addlegendentry{CM}
\addplot[color=red,smooth,mark=None,] table [x=round, y=c2_v, col sep=comma, mark=none, smooth] {data/acc_round.csv};
\addlegendentry{Vanilla FL}
\end{axis}
\end{tikzpicture}}
\end{minipage}%
   \hfill\begin{minipage}[b]{0.2\textwidth}
\resizebox{1\textwidth}{!}{%
\begin{tikzpicture}
\begin{axis}[
    title={Concept 3},
    xlabel={Round},
    ylabel near ticks,
    xmin=0, xmax=101,
    ymin=0., ymax=0.9,
    legend pos=south east,
    ymajorgrids=true,
    grid style=dashed,
]
\addplot[color=blue,smooth,mark=None,] table [x=round, y=c3_cm, col sep=comma, mark=none, smooth] {data/acc_round.csv};
\addlegendentry{CM}
\addplot[color=red,smooth,mark=None,] table [x=round, y=c3_v, col sep=comma, mark=none, smooth] {data/acc_round.csv};
\addlegendentry{Vanilla FL}
\end{axis}
\end{tikzpicture}}
\end{minipage}%
   \hfill\begin{minipage}[b]{0.2\textwidth}
\resizebox{1\textwidth}{!}{%
\begin{tikzpicture}
\begin{axis}[
    title={Concept 4},
    xlabel={Round},
    ylabel near ticks,
    xmin=0, xmax=101,
    ymin=0.5, ymax=0.95,
    legend pos=south east,
    ymajorgrids=true,
    grid style=dashed,
]
\addplot[color=blue,smooth,mark=None,] table [x=round, y=c4_cm, col sep=comma, mark=none, smooth] {data/acc_round.csv};
\addlegendentry{CM}
\addplot[color=red,smooth,mark=None,] table [x=round, y=c4_v, col sep=comma, mark=none, smooth] {data/acc_round.csv};
\addlegendentry{Vanilla FL}
\end{axis}
\end{tikzpicture}}
\end{minipage}%
   \hfill
  \begin{minipage}[b]{0.2\textwidth}
\resizebox{1\textwidth}{!}{%
\begin{tikzpicture}
\begin{axis}[
    title={Concept 5},
    xlabel={Round},
    ylabel near ticks,
    xmin=0, xmax=100,
    ymin=0.3, ymax=0.95,
    legend pos=south east,
    ymajorgrids=true,
    grid style=dashed,
]
\addplot[color=blue,smooth,mark=None,] table [x=round, y=c5_cm, col sep=comma, mark=none, smooth] {data/acc_round.csv};
\addlegendentry{CM}
\addplot[color=red,smooth,mark=None,] table [x=round, y=c5_v, col sep=comma, mark=none, smooth] {data/acc_round.csv};
\addlegendentry{Vanilla FL}
\end{axis}
\end{tikzpicture}}
\end{minipage}%
\caption{CM vs. vanilla FL: Test set accuracy over communication rounds}
\label{eff}
\end{figure}
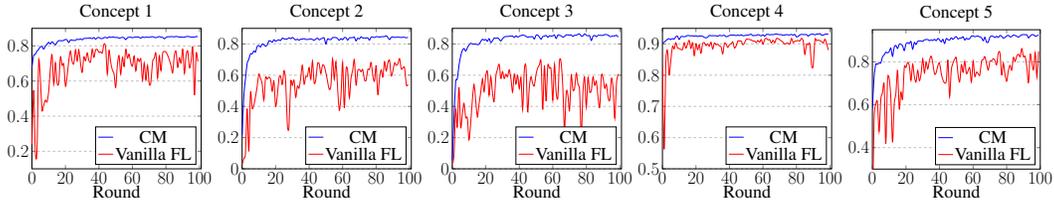

\textbf{Comparison with FCL baselines.} Table~\ref{table_base} shows the results of the comparison, which is performed with the same experimental setup. EWC~\citep{french1999catastrophic} is a commonly employed baseline in CL. We apply it on the clients' training, and then the server aggregates with FedAvg. FedWeIT~\citep{yoon2021federated} is a recently published work for FCL. 
CM outperforms the baselines and achieves 90.3\% accuracy. While EWC performs reasonably well (86.7\%), FedWeIT does not perform well under more realistic assumptions and in a larger scale experiment than its original evaluation (i.e., 5 clients per round, 5 classes to train per client, and non-overlapping classes over clients). Since FedWeIT applies a completely different design when learning information across tasks or concepts, its inferior performance may be partially due to the sparse parameters employed, which fail to separate different concepts and fully capture the complex information (i.e. up to 50 classes) in the concepts.

\begin{table}[] 
\addtolength{\tabcolsep}{-0.1em}
\centering 
\caption{Test set accuracy (\%) comparison with baselines } 
\resizebox{0.5\textwidth}{!}{
\begin{tabular}{*7c}
\toprule
Concept & \shortstack{1} & \shortstack{2}  & \shortstack{3} & \shortstack{4} & \shortstack{5} & \shortstack{avg} \\ 
\midrule
EWC  &  82.3  &  74.5  & 72.3 & 92.1 & 85.9 &  86.7\\ 
FedWeIT &  61.0  & 62.0     & 66.8 &  72.6 & 70.7  & 68.3 \\ 
CM          &  85.4  &  85.2  & 86.5 & 93.3 & 92.4 &  90.3\\ 
\bottomrule
\end{tabular}}
\label{table_base}  
\end{table}

\textbf{Performance of clustering algorithms.} Table~\ref{table_cl} shows the results for 5 clustering algorithms whose parameters are detailed in Appendix~\ref{app:para}. A perfect clustering can group all client models correctly with the same concept. Adjusted Rand Index (ARI) is a commonly used metric for clustering algorithms: 1.0 stands for perfect matching. Table~\ref{table_cl} also shows the minimum ARI, as the worst clustering performance over 100 rounds. The results show that CM with BIRCH performs best, as it achieves up to 96 rounds of perfect clustering out of 100 and 0.994 average ARI. Furthermore, all algorithms perform reasonably well and achieve over 88.4\% average model accuracy.

\begin{table}[] 
\centering 
\caption{Clustering performance with 100 rounds training} 
    
\resizebox{0.9\textwidth}{!}{
\begin{tabular}{*6c}
\toprule
 & Kmean & Agglomerative  & BIRCH & \shortstack{DBSCAN} & \shortstack{OPTICS}  \\
 \midrule
Rounds with perfect clustering &  91  & 93     & 96 &  66 & 50 \\ 
ARI (average)  &  0.988  &  0.989  & 0.994 & 0.972 & 0.888   \\ 
ARI (minimum) &  0.713  &  0.704  & 0.771 & 0.700 & 0.478   \\ 
Model accuracy \% (average) &  90.3  &  90.1  & 90.4 & 88.5 & 88.4   \\
\bottomrule
\end{tabular}}
\label{table_cl}  
\end{table}

\textbf{Concept matching accuracy.} 
The concept matching accuracy is defined as the percentage of correct concept matching over the entire training process (i.e., the collaborative client/server matching).
Table~\ref{table_cm} shows the concept matching accuracy with a variety of clustering algorithms and distance metrics. To quantify the concept matching accuracy separately with minimum influence from the clustering errors, we use the cluster ID that has the most number of clients with the given data concept, instead of the actual cluster ID for the mapping of a given client data concept.
The results demonstrate that CM achieves up to 100\% concept matching accuracy. Table~\ref{table_cl} and~\ref{table_cm} demonstrate that CM provides the flexibility to use different clustering algorithms and distance metrics, as it performs well with all of them. 
As neural networks grow in size and complexity, the curse of dimensionality may manifest itself and requires advanced clustering algorithms and distance metrics. 


\begin{table}[] 
\centering 
\caption{Concept matching accuracy (\%) with 100 training rounds} 
    
\resizebox{0.75\textwidth}{!}{
\begin{tabular}{*6c}
\toprule
 & Kmean & Agglomerative  & BIRCH & \shortstack{DBSCAN} & \shortstack{OPTICS}  \\ 
  \midrule
Manhattan &  100  & 100     & 100 &  97.4 & 93.4 \\ 
Euclidean  &  100  &  99.8  & 100 & 90.4 & 94.9   \\ 
Chebyshev &  98.6  &  99.9  & 100 & 97.7 & 93.6   \\ 

\bottomrule
\end{tabular}}
\label{table_cm}  
\end{table}

  \begin{table}[t!]
  \resizebox{0.45\textwidth}{!}{
    \begin{minipage}{.5\textwidth}
      \centering
      \caption{CM performance vs. number of clients} 
      \label{table_c}
\begin{tabular}{*4c}
\toprule
 & \shortstack{ARI \\(average)} & \shortstack{CM\\ accuracy \%}  & \shortstack{Model\\ accuracy \%} \\ 
    \midrule
20 &  0.988  & 100     & 90.3 \\
40  &  0.983  &  99.7  & 95.3 \\ 
80  &  1.000 &  100  & 95.4 \\ 
\bottomrule
\end{tabular}
    \end{minipage}
    }
      \resizebox{0.45\textwidth}{!}{
    \begin{minipage}{.5\textwidth}
      \centering
      \caption{CM performance vs. model size} 

\begin{tabular}{*4c}
\toprule
 & \shortstack{ARI\\(average) } & \shortstack{CM\\ accuracy \%}  & \shortstack{Model\\ accuracy \%} \\ 
   \midrule
-20\% &  0.983  & 100     & 90.0 \\ 
original  &  0.988  & 100     & 90.3 \\ 
+20\%  &  0.999 &  100  & 90.4 \\ 
\bottomrule
\end{tabular}
      \label{table_s}
    \end{minipage}
}    
  \end{table}

\textbf{Concept matching scalability.} 
Table~\ref{table_c} shows CM performs well as the number of clients increases. 
Let us note that with 80 clients, both the clustering and the CM algorithms perform perfectly. This is because the clustering algorithms generally perform better with larger number of samples. CM enjoys this benefit and may achieve better model performance (up to 95.4\%) with a larger number of clients. 
We further test CM with 20\% increase or decrease in the size of CNN layer channels and dense layer neurons. A larger model can further stress-test CM, and a smaller model can reduce the communication overhead for CM. 
To avoid overfitting the model under the given dataset, we could not further downsize the model or split the data into more clients.
Table~\ref{table_s} shows the performance metrics of CM, and there is a small improvement (90.4\%) in accuracy when using a larger model. 
Tables~\ref{table_c} and~\ref{table_s} show that both the clustering and the concept matching achieve near flawless performance in all the settings.

\section{Conclusion}
\label{sec:conclusion} 

Concept Matching (CM) is a novel FCL framework to alleviate catastrophic forgetting and interference among clients by training different models for different concepts concealed in the data.
To avoid interference among clients, CM uses a clustering algorithm to group the client models with the same concept.
To mitigate catastrophic forgetting, the server and the clients run concept matching algorithms that collaboratively train and update each concept model with the matching data of the same concept. Also, the server concept matching algorithm ensures the updating of the concept model in the correct gradient descent direction. CM achieves higher model accuracy than state-of-the-art systems, and works regardless of whether the clients are aware of the concepts or not.  
Our extensive evaluation also demonstrates that CM  performs well with a variety of clustering algorithms and distance metrics, and scales well with the number of clients and the model size. 


\bibliographystyle{plainnat}
\bibliography{sample-base}

\begin{thebibliography}{42}
\providecommand{\natexlab}[1]{#1}
\providecommand{\url}[1]{\texttt{#1}}
\expandafter\ifx\csname urlstyle\endcsname\relax
  \providecommand{\doi}[1]{doi: #1}\else
  \providecommand{\doi}{doi: \begingroup \urlstyle{rm}\Url}\fi

\bibitem[Ahn et~al.(2019)Ahn, Cha, Lee, and Moon]{ahn2019uncertainty}
Hongjoon Ahn, Sungmin Cha, Donggyu Lee, and Taesup Moon.
\newblock Uncertainty-based continual learning with adaptive regularization.
\newblock \emph{Advances in neural information processing systems}, 32, 2019.

\bibitem[Aljundi et~al.(2019)Aljundi, Kelchtermans, and Tuytelaars]{aljundi2019task}
Rahaf Aljundi, Klaas Kelchtermans, and Tinne Tuytelaars.
\newblock Task-free continual learning.
\newblock In \emph{Proceedings of the IEEE/CVF Conference on Computer Vision and Pattern Recognition}, pages 11254--11263, 2019.

\bibitem[Ankerst et~al.(1999)Ankerst, Breunig, Kriegel, and Sander]{ankerst1999optics}
Mihael Ankerst, Markus~M Breunig, Hans-Peter Kriegel, and J{\"o}rg Sander.
\newblock Optics: Ordering points to identify the clustering structure.
\newblock \emph{ACM Sigmod record}, 28\penalty0 (2):\penalty0 49--60, 1999.

\bibitem[Beutel et~al.(2020)Beutel, Topal, Mathur, Qiu, Fernandez-Marques, Gao, Sani, Li, Parcollet, de~Gusm{\~a}o, et~al.]{beutel2020flower}
Daniel~J Beutel, Taner Topal, Akhil Mathur, Xinchi Qiu, Javier Fernandez-Marques, Yan Gao, Lorenzo Sani, Kwing~Hei Li, Titouan Parcollet, Pedro Porto~Buarque de~Gusm{\~a}o, et~al.
\newblock Flower: A friendly federated learning research framework.
\newblock \emph{arXiv preprint arXiv:2007.14390}, 2020.

\bibitem[Carreira-Perpin{\'a}n(1997)]{carreira1997review}
Miguel~A Carreira-Perpin{\'a}n.
\newblock A review of dimension reduction techniques.
\newblock \emph{Department of Computer Science. University of Sheffield. Tech. Rep. CS-96-09}, 9:\penalty0 1--69, 1997.

\bibitem[Casado et~al.(2022)Casado, Lema, Criado, Iglesias, Regueiro, and Barro]{casado2022concept}
Fernando~E Casado, Dylan Lema, Marcos~F Criado, Roberto Iglesias, Carlos~V Regueiro, and Sen{\'e}n Barro.
\newblock Concept drift detection and adaptation for federated and continual learning.
\newblock \emph{Multimedia Tools and Applications}, pages 1--23, 2022.

\bibitem[De~Lange et~al.(2021)De~Lange, Aljundi, Masana, Parisot, Jia, Leonardis, Slabaugh, and Tuytelaars]{de2021continual}
Matthias De~Lange, Rahaf Aljundi, Marc Masana, Sarah Parisot, Xu~Jia, Ale{\v{s}} Leonardis, Gregory Slabaugh, and Tinne Tuytelaars.
\newblock A continual learning survey: Defying forgetting in classification tasks.
\newblock \emph{IEEE transactions on pattern analysis and machine intelligence}, 44\penalty0 (7):\penalty0 3366--3385, 2021.

\bibitem[Deng et~al.(2020)Deng, Li, Han, Shi, and Xie]{deng2020model}
Lei Deng, Guoqi Li, Song Han, Luping Shi, and Yuan Xie.
\newblock Model compression and hardware acceleration for neural networks: A comprehensive survey.
\newblock \emph{Proceedings of the IEEE}, 108\penalty0 (4):\penalty0 485--532, 2020.

\bibitem[Dong et~al.(2022)Dong, Wang, Fang, Sun, Xu, Wang, and Zhu]{dong2022federated}
Jiahua Dong, Lixu Wang, Zhen Fang, Gan Sun, Shichao Xu, Xiao Wang, and Qi~Zhu.
\newblock Federated class-incremental learning.
\newblock In \emph{Proceedings of the IEEE/CVF conference on computer vision and pattern recognition}, pages 10164--10173, 2022.

\bibitem[Ester et~al.(1996)Ester, Kriegel, Sander, Xu, et~al.]{ester1996density}
Martin Ester, Hans-Peter Kriegel, J{\"o}rg Sander, Xiaowei Xu, et~al.
\newblock A density-based algorithm for discovering clusters in large spatial databases with noise.
\newblock In \emph{kdd}, volume~96, pages 226--231, 1996.

\bibitem[French(1999)]{french1999catastrophic}
Robert~M French.
\newblock Catastrophic forgetting in connectionist networks.
\newblock \emph{Trends in cognitive sciences}, 3\penalty0 (4):\penalty0 128--135, 1999.

\bibitem[Gao et~al.(2022)Gao, Fu, Li, Chen, Xu, and Xu]{gao2022feddc}
Liang Gao, Huazhu Fu, Li~Li, Yingwen Chen, Ming Xu, and Cheng-Zhong Xu.
\newblock Feddc: Federated learning with non-iid data via local drift decoupling and correction.
\newblock In \emph{Proceedings of the IEEE/CVF Conference on Computer Vision and Pattern Recognition}, pages 10112--10121, 2022.

\bibitem[Geiping et~al.(2020)Geiping, Bauermeister, Dr{\"o}ge, and Moeller]{geiping2020inverting}
Jonas Geiping, Hartmut Bauermeister, Hannah Dr{\"o}ge, and Michael Moeller.
\newblock Inverting gradients-how easy is it to break privacy in federated learning?
\newblock \emph{Advances in Neural Information Processing Systems}, 33:\penalty0 16937--16947, 2020.

\bibitem[Ghosh et~al.(2020)Ghosh, Chung, Yin, and Ramchandran]{ghosh2020efficient}
Avishek Ghosh, Jichan Chung, Dong Yin, and Kannan Ramchandran.
\newblock An efficient framework for clustered federated learning.
\newblock \emph{Advances in Neural Information Processing Systems}, 33:\penalty0 19586--19597, 2020.

\bibitem[Guo et~al.(2021)Guo, Lin, and Tang]{guo2021towards}
Yongxin Guo, Tao Lin, and Xiaoying Tang.
\newblock Towards federated learning on time-evolving heterogeneous data.
\newblock \emph{arXiv preprint arXiv:2112.13246}, 2021.

\bibitem[Huang et~al.(2022)Huang, Ye, and Du]{huang2022learn}
Wenke Huang, Mang Ye, and Bo~Du.
\newblock Learn from others and be yourself in heterogeneous federated learning.
\newblock In \emph{Proceedings of the IEEE/CVF Conference on Computer Vision and Pattern Recognition}, pages 10143--10153, 2022.

\bibitem[Isele and Cosgun(2018)]{isele2018selective}
David Isele and Akansel Cosgun.
\newblock Selective experience replay for lifelong learning.
\newblock In \emph{Proceedings of the AAAI Conference on Artificial Intelligence}, volume~32, 2018.

\bibitem[Jiang and Borcea(2023)]{jiang2023complement}
Xiaopeng Jiang and Cristian Borcea.
\newblock Complement sparsification: Low-overhead model pruning for federated learning.
\newblock In \emph{Proceedings of the AAAI Conference on Artificial Intelligence}, 2023.

\bibitem[Jiang et~al.(2022)Jiang, Hu, On, Lai, Mayyuri, Chen, Shila, Larmuseau, Jin, Borcea, et~al.]{jiang2022flsys}
Xiaopeng Jiang, Han Hu, Thinh On, Phung Lai, Vijaya~Datta Mayyuri, An~Chen, Devu~M Shila, Adriaan Larmuseau, Ruoming Jin, Cristian Borcea, et~al.
\newblock Flsys: Toward an open ecosystem for federated learning mobile apps.
\newblock \emph{IEEE Transactions on Mobile Computing}, 2022.

\bibitem[Jiang et~al.(2023)Jiang, On, Phan, Mohammadi, Mayyuri, Chen, Jin, and Borcea]{jiang2023zone}
Xiaopeng Jiang, Thinh On, NhatHai Phan, Hessamaldin Mohammadi, Vijaya~Datta Mayyuri, An~Chen, Ruoming Jin, and Cristian Borcea.
\newblock Zone-based federated learning for mobile sensing data.
\newblock In \emph{2023 IEEE International Conference on Pervasive Computing and Communications (PerCom)}, pages 141--148. IEEE, 2023.

\bibitem[Kim et~al.(2022)Kim, Esmaeilpour, Xiao, and Liu]{kim2022continual}
Gyuhak Kim, Sepideh Esmaeilpour, Changnan Xiao, and Bing Liu.
\newblock Continual learning based on ood detection and task masking.
\newblock In \emph{Proceedings of the IEEE/CVF Conference on Computer Vision and Pattern Recognition}, pages 3856--3866, 2022.

\bibitem[LeCun et~al.(1998)LeCun, Bottou, Bengio, and Haffner]{lecun1998gradient}
Yann LeCun, L{\'e}on Bottou, Yoshua Bengio, and Patrick Haffner.
\newblock Gradient-based learning applied to document recognition.
\newblock \emph{Proceedings of the IEEE}, 86\penalty0 (11):\penalty0 2278--2324, 1998.

\bibitem[Lomonaco and Maltoni(2017)]{lomonaco2017core50}
Vincenzo Lomonaco and Davide Maltoni.
\newblock Core50: a new dataset and benchmark for continuous object recognition.
\newblock In \emph{Conference on Robot Learning}, pages 17--26. PMLR, 2017.

\bibitem[Luping et~al.(2019)Luping, Wei, and Bo]{luping2019cmfl}
WANG Luping, WANG Wei, and LI~Bo.
\newblock Cmfl: Mitigating communication overhead for federated learning.
\newblock In \emph{2019 IEEE 39th international conference on distributed computing systems (ICDCS)}, pages 954--964. IEEE, 2019.

\bibitem[Ma et~al.(2022)Ma, Xie, Wang, Chen, and Shou]{ma2022continual}
Yuhang Ma, Zhongle Xie, Jue Wang, Ke~Chen, and Lidan Shou.
\newblock Continual federated learning based on knowledge distillation.
\newblock In \emph{Proceedings of the Thirty-First International Joint Conference on Artificial Intelligence}, volume~3, 2022.

\bibitem[Mai et~al.(2021)Mai, Li, Kim, and Sanner]{mai2021supervised}
Zheda Mai, Ruiwen Li, Hyunwoo Kim, and Scott Sanner.
\newblock Supervised contrastive replay: Revisiting the nearest class mean classifier in online class-incremental continual learning.
\newblock In \emph{Proceedings of the IEEE/CVF Conference on Computer Vision and Pattern Recognition}, pages 3589--3599, 2021.

\bibitem[Mai et~al.(2022)Mai, Li, Jeong, Quispe, Kim, and Sanner]{mai2022online}
Zheda Mai, Ruiwen Li, Jihwan Jeong, David Quispe, Hyunwoo Kim, and Scott Sanner.
\newblock Online continual learning in image classification: An empirical survey.
\newblock \emph{Neurocomputing}, 469:\penalty0 28--51, 2022.

\bibitem[Mallya and Lazebnik(2018)]{mallya2018packnet}
Arun Mallya and Svetlana Lazebnik.
\newblock Packnet: Adding multiple tasks to a single network by iterative pruning.
\newblock In \emph{Proceedings of the IEEE conference on Computer Vision and Pattern Recognition}, pages 7765--7773, 2018.

\bibitem[Maltoni and Lomonaco(2019)]{maltoni2019continuous}
Davide Maltoni and Vincenzo Lomonaco.
\newblock Continuous learning in single-incremental-task scenarios.
\newblock \emph{Neural Networks}, 116:\penalty0 56--73, 2019.

\bibitem[McMahan et~al.(2017)McMahan, Moore, Ramage, Hampson, and y~Arcas]{mcmahan2017communication}
Brendan McMahan, Eider Moore, Daniel Ramage, Seth Hampson, and Blaise~Aguera y~Arcas.
\newblock Communication-efficient learning of deep networks from decentralized data.
\newblock In \emph{Artificial intelligence and statistics}, pages 1273--1282. PMLR, 2017.

\bibitem[Mitchell(1997)]{mitchell1997machine}
Tom~Michael Mitchell.
\newblock \emph{Machine learning}.
\newblock McGraw-hill New York, 1997.

\bibitem[Mothukuri et~al.(2021)Mothukuri, Parizi, Pouriyeh, Huang, Dehghantanha, and Srivastava]{mothukuri2021survey}
Viraaji Mothukuri, Reza~M Parizi, Seyedamin Pouriyeh, Yan Huang, Ali Dehghantanha, and Gautam Srivastava.
\newblock A survey on security and privacy of federated learning.
\newblock \emph{Future Generation Computer Systems}, 115:\penalty0 619--640, 2021.

\bibitem[Ouyang et~al.(2022)Ouyang, Xie, Zhou, Xing, and Huang]{ouyang2022clusterfl}
Xiaomin Ouyang, Zhiyuan Xie, Jiayu Zhou, Guoliang Xing, and Jianwei Huang.
\newblock Clusterfl: A clustering-based federated learning system for human activity recognition.
\newblock \emph{ACM Transactions on Sensor Networks}, 19\penalty0 (1):\penalty0 1--32, 2022.

\bibitem[Qi et~al.(2023)Qi, Zhao, and Li]{qi2023better}
Daiqing Qi, Handong Zhao, and Sheng Li.
\newblock Better generative replay for continual federated learning.
\newblock In \emph{The Eleventh International Conference on Learning Representations}, 2023.

\bibitem[Rolnick et~al.(2019)Rolnick, Ahuja, Schwarz, Lillicrap, and Wayne]{rolnick2019experience}
David Rolnick, Arun Ahuja, Jonathan Schwarz, Timothy Lillicrap, and Gregory Wayne.
\newblock Experience replay for continual learning.
\newblock \emph{Advances in Neural Information Processing Systems}, 32, 2019.

\bibitem[Serra et~al.(2018)Serra, Suris, Miron, and Karatzoglou]{serra2018overcoming}
Joan Serra, Didac Suris, Marius Miron, and Alexandros Karatzoglou.
\newblock Overcoming catastrophic forgetting with hard attention to the task.
\newblock In \emph{International Conference on Machine Learning}, pages 4548--4557. PMLR, 2018.

\bibitem[Shim et~al.(2021)Shim, Mai, Jeong, Sanner, Kim, and Jang]{shim2021online}
Dongsub Shim, Zheda Mai, Jihwan Jeong, Scott Sanner, Hyunwoo Kim, and Jongseong Jang.
\newblock Online class-incremental continual learning with adversarial shapley value.
\newblock In \emph{Proceedings of the AAAI Conference on Artificial Intelligence}, volume~35, pages 9630--9638, 2021.

\bibitem[Wei et~al.(2020)Wei, Li, Ding, Ma, Yang, Farokhi, Jin, Quek, and Poor]{wei2020federated}
Kang Wei, Jun Li, Ming Ding, Chuan Ma, Howard~H Yang, Farhad Farokhi, Shi Jin, Tony~QS Quek, and H~Vincent Poor.
\newblock Federated learning with differential privacy: Algorithms and performance analysis.
\newblock \emph{IEEE Transactions on Information Forensics and Security}, 15:\penalty0 3454--3469, 2020.

\bibitem[Wu et~al.(2020)Wu, He, Lin, Mao, Maple, and Jarvis]{wu2020safa}
Wentai Wu, Ligang He, Weiwei Lin, Rui Mao, Carsten Maple, and Stephen Jarvis.
\newblock Safa: A semi-asynchronous protocol for fast federated learning with low overhead.
\newblock \emph{IEEE Transactions on Computers}, 70\penalty0 (5):\penalty0 655--668, 2020.

\bibitem[Yoon et~al.(2021)Yoon, Jeong, Lee, Yang, and Hwang]{yoon2021federated}
Jaehong Yoon, Wonyong Jeong, Giwoong Lee, Eunho Yang, and Sung~Ju Hwang.
\newblock Federated continual learning with weighted inter-client transfer.
\newblock In \emph{International Conference on Machine Learning}, pages 12073--12086. PMLR, 2021.

\bibitem[Zhang et~al.(2023)Zhang, Chen, Zhuang, and Lv]{zhang2023addressing}
Jie Zhang, Chen Chen, Weiming Zhuang, and Lingjuan Lv.
\newblock Addressing catastrophic forgetting in federated class-continual learning.
\newblock \emph{arXiv preprint arXiv:2303.06937}, 2023.

\bibitem[Zhang et~al.(2022)Zhang, Shen, Ding, Tao, and Duan]{zhang2022fine}
Lin Zhang, Li~Shen, Liang Ding, Dacheng Tao, and Ling-Yu Duan.
\newblock Fine-tuning global model via data-free knowledge distillation for non-iid federated learning.
\newblock In \emph{Proceedings of the IEEE/CVF Conference on Computer Vision and Pattern Recognition}, pages 10174--10183, 2022.

\end{thebibliography}

\newpage

\appendix
\section{Algorithmic Description of Concept Matching Framework}
\label{app:alg1}
Algorithm~\ref{cm_frame} describes the pseudo-code of the CM framework. CM executes as a multi-round, iterative FL cycle (lines 3-9). At each round (line 3), each client operates in parallel (line 5), including clients concept matching (line 13), local model updates with batches of data (line 14-16), and returning the client model weights to the server (line 18). Then, server performs clustering (line 7), aggregation (line 8), and server concept matching to update the global concept models with the aggregated cluster models (line 9). 

\begin{algorithm}[h]
    \scriptsize
	\caption{Concept Matching Framework Pseudo-code~\label{cm_frame}} 
	\begin{algorithmic}[1]
 	    \Procedure{ServerExecute:}{}
	        \State initialize $W^{0} = (w_1,w_2,...,w_K)^{0}$ \textbf{randomly}, total number of rounds $T$
            \For{$t=1$ to $T$}
         \State // Update Done at Clients and Returned to Server
         \For {each client $n$} // In Parallel
                    \State $\theta_n$ = n.\Call{ClientUpdate}{$W^{t-1}$}
        \EndFor                    
\State $\Omega^t = Cluster(\{\theta^{t}_{n}\}_{n=1}^N)$ 
\State $\Theta^t = Aggregate(\Omega^t)$
\State $W^{t} = ServerConceptMatch(\Theta^t,W^{t-1})$
         \EndFor
	    \EndProcedure

	    \item[]
     	    \Procedure{ClientUpdate}{$W$}
          \State // Executed at Clients
                    \State require step size hyperparameter $\eta$, local dataset of current round $\mathcal{D}$
                    \State $k^* = ClientConceptMatch(W,\mathcal{D})$
                    \State $x_n \gets$ $\mathcal{D}$ divided into minibatches
                    \For {each batch $b \in x_n$}
                    \State $\theta_{n} = w_{k^*} - \eta \nabla L(w_{k^*},b)$
                    \EndFor
                    \State // Results Returned to Server
                    \State \Return $\theta_{n}$
	    \EndProcedure

	\end{algorithmic} 
\end{algorithm} 

\section{Theorem Proof}
\label{app:th}

\setcounter{assumption}{0}

\begin{assumption}
Differentiability: The loss function $L(w)$, used to optimize a neural network, is differentiable with respect to the model parameters $w$.
\end{assumption}

\begin{assumption}
Lipschitz continuity: The gradient of the loss function $\nabla L(w)$ is Lipschitz continuous with a positive constant \text{L}. By Lipschitz continuity definition, for any two points $w^1$ and $w^2$, the following inequality holds $\lVert \nabla L(w^1) - \nabla L(w^2)\rVert \leq \text{L} \lVert w^1 -w^2 \rVert$, 
where  $\lVert.\rVert$ denotes the norm.
\end{assumption}

\begin{lemma}
Given a loss function $L(w)$ under assumption~\ref{ass_d} and ~\ref{ass_l}, $w$ is updated with gradient descent  $w^{t+1} = w^t - \eta \nabla L(w^t)$, where $t$ is the iteration number, $\eta$ is the learning rate, and $\nabla L(w^t)$ is the gradient of the loss function with respect to $w^t$, the following inequality holds $\lVert \nabla L(w^{t+1}) \rVert<\lVert \nabla L(w^{t}) \rVert$.
\label{l}
\end{lemma}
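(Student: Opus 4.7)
The plan is to use the Lipschitz smoothness of $\nabla L$ (Assumption~\ref{ass_l}) to control the gradient at the new iterate in terms of the gradient at the old one, exploiting the fact that the gradient-descent step itself is proportional to $\nabla L(w^t)$. First I would write the update as $w^{t+1}-w^t = -\eta\,\nabla L(w^t)$, so that analysing the change in the gradient becomes a first-order perturbation analysis around $w^t$.

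Second, using Assumption~\ref{ass_d} and the fundamental theorem of calculus for vector-valued maps, I would expand
\[
\nabla L(w^{t+1}) - \nabla L(w^t) = \left(\int_0^1 \nabla^2 L\bigl(w^t + s(w^{t+1}-w^t)\bigr)\,ds\right)(w^{t+1}-w^t) = -\eta\,H_t\,\nabla L(w^t),
\]
where $H_t$ denotes the averaged Hessian along the segment from $w^t$ to $w^{t+1}$. This yields the compact recursion $\nabla L(w^{t+1}) = (I - \eta H_t)\,\nabla L(w^t)$, reducing the claim to showing that $I - \eta H_t$ acts as a strict contraction on the direction $\nabla L(w^t)$.

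Third, for the contraction bound, Assumption~\ref{ass_l} gives $\lVert H_t\rVert \le L$; in the regime where $H_t$ is positive semidefinite along the trajectory and the learning rate satisfies $\eta < 2/L$, the spectrum of $I - \eta H_t$ lies strictly inside $(-1,1)$, so $\lVert I - \eta H_t\rVert < 1$ and
\[
\lVert \nabla L(w^{t+1})\rVert \le \lVert I - \eta H_t\rVert \cdot \lVert \nabla L(w^t)\rVert < \lVert \nabla L(w^t)\rVert,
\]
which is exactly Lemma~\ref{l}. Theorem~\ref{th} then follows immediately because $\lVert w^{s+1}-w^s\rVert = \eta\lVert\nabla L(w^s)\rVert$ at every iterate $s$.

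The hard part will be precisely the positivity of $H_t$. Assumption~\ref{ass_l} alone does not rule out negative eigenvalues of the Hessian, in which case $I - \eta H_t$ may have operator norm exceeding one and the per-step strict decrease can fail. To close this gap without explicitly invoking convexity, I would either restrict the statement to iterations where the trajectory has entered a basin in which the averaged Hessian is PSD, or combine the standard descent lemma $L(w^{t+1}) \le L(w^t) - \tfrac{\eta}{2}\lVert\nabla L(w^t)\rVert^2$ with the resulting summability of $\sum_t \lVert \nabla L(w^t)\rVert^2$ to argue that the gradient norms are eventually strictly decreasing. Either refinement is enough to justify the per-iteration inequality that the server concept matching algorithm relies on.
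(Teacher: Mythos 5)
Your route is genuinely different from the paper's: the paper never introduces the Hessian. It applies the reverse triangle inequality to the Lipschitz bound $\lVert \nabla L(w^{t+1}) - \nabla L(w^t)\rVert \le \text{L}\lVert w^{t+1}-w^t\rVert = \text{L}\eta\lVert\nabla L(w^t)\rVert$ to get $\lVert\nabla L(w^{t+1})\rVert - \lVert\nabla L(w^t)\rVert \le \text{L}\eta\lVert\nabla L(w^t)\rVert$, and then ``rearranges'' this into $\lVert\nabla L(w^{t+1})\rVert \le (1-\text{L}\eta)\lVert\nabla L(w^t)\rVert$. That rearrangement is a sign error: the correct consequence is $\lVert\nabla L(w^{t+1})\rVert \le (1+\text{L}\eta)\lVert\nabla L(w^t)\rVert$, an upper bound in the useless direction, so the paper's own argument does not establish the lemma either.

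Your analysis correctly locates where the real difficulty lives. The averaged-Hessian identity $\nabla L(w^{t+1}) = (I-\eta H_t)\nabla L(w^t)$ is the right lens (modulo needing twice-differentiability, slightly stronger than Assumption~\ref{ass_d}), and your observation that Assumption~\ref{ass_l} only gives $\lVert H_t\rVert \le \text{L}$ without controlling its sign is the genuine obstruction: with negative curvature along the gradient direction, $\lVert I-\eta H_t\rVert$ exceeds one and the gradient norm grows, so the lemma is false as stated under the given assumptions (a one-dimensional concave quadratic $L(w)=-\tfrac{c}{2}w^2$ already gives $\lVert\nabla L(w^{t+1})\rVert=(1+\eta c)\lVert\nabla L(w^t)\rVert$). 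However, neither of your proposed repairs delivers the statement the paper actually needs. Restricting to a basin where the averaged Hessian is PSD adds a hypothesis the paper does not make (and even then a zero eigenvalue in the gradient direction blocks \emph{strict} decrease, so you need $\eta<2/\text{L}$ plus strictly positive curvature along $\nabla L(w^t)$). And summability of $\sum_t\lVert\nabla L(w^t)\rVert^2$ from the descent lemma only forces $\lVert\nabla L(w^t)\rVert\to 0$, which is compatible with non-monotone oscillation at every finite horizon; it does not yield the per-iteration strict inequality that Theorem~\ref{th} and the server concept matching algorithm rely on. In short, your proof attempt is more careful than the paper's, but the gap you flag cannot be closed without strengthening the hypotheses, because it is a gap in the statement rather than in the argument.
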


\begin{proof}
To prove $\lVert \nabla L(w^{t+1}) \rVert<\lVert \nabla L(w^{t}) \rVert$, we can use the assumption~\ref{ass_l}.
Let $\text{L}$ be the Lipschitz constant. Then, we have
$\lVert \nabla L(w^{t+1}) - \nabla L(w^{t})\rVert \leq \text{L}\lVert w^{t+1} -w^t \rVert$.

Now, using the reverse triangle inequality, we can write
$\lVert\nabla L(w^{t+1})\rVert-\lVert\nabla L(w^{t})\rVert\leq\lVert \nabla L(w^{t+1})-\nabla L(w^{t})\rVert$.
Substituting the previous inequality, we get
$\lVert\nabla L(w^{t+1})\rVert-\lVert\nabla L(w^{t})\rVert\leq \text{L}\lVert w^{t+1} -w^t \rVert$.

Using the gradient descent update $w^{t+1} = w^t - \eta \nabla L(w^t)$, we can write
$\lVert\nabla L(w^{t+1})\rVert-\lVert\nabla L(w^{t})\rVert\leq \text{L}\eta\lVert \nabla L(w^t)\rVert$.
Rearranging the terms, we get
$\lVert\nabla L(w^{t+1})\rVert\leq (1-\text{L}\eta) \lVert \nabla L(w^t)\rVert$.

Since $\text{L}$ and $\eta$ are both positive, we have
$1 - \text{L}\eta < 1$.
Therefore, we can conclude that
$\lVert \nabla L(w^{t+1}) \rVert<\lVert \nabla L(w^{t}) \rVert$.
This completes the proof.
 \label{lemma_pf}
\end{proof}

\setcounter{theorem}{0}
\begin{theorem}
Given a loss function $L(w)$ under assumptions~\ref{ass_d} and ~\ref{ass_l}, $w$ is updated with gradient descent  $w^{t+1} = w^t - \eta \nabla L(w^t)$, where $t$ is the iteration number, $\eta$ is the learning rate, and $\nabla L(w^t)$ is the gradient of the loss function with respect to $w^t$, the following inequality holds $\lVert w^{t+1} -w^{t}\rVert<\lVert w^{t} -w^{t-1}\rVert$.
 \label{th}
\end{theorem}


\begin{proof}
From the inequality in lemma~\ref{l}, $\lVert \nabla L(w^{t}) \rVert<\lVert \nabla L(w^{t-1}) \rVert$, using the gradient descent update $w^{t+1} = w^t - \eta \nabla L(w^t)$, we write $\lVert w^{t+1} -w^{t}\rVert<\lVert w^{t} -w^{t-1}\rVert$. This completes the proof.
 \label{pf}
\end{proof}

\section{Evaluation Details}

\subsection{Dataset}
\label{app:data}

We evaluate CM with a ``super'' dataset, similar to a recent FCL work~\citep{yoon2021federated}, which consists of six frequently used image datasets: SVHN, FaceScrub, MNIST, Fashion-MNIST, Not-MNIST, and TrafficSigns. To simulate different concepts, the ``super'' dataset is splitted into five concepts. As shown in Table~\ref{table_data}, the data in the five concepts differ across a wide spectrum of classes and number of samples. The original FaceScrub dataset has 100 classes. In order to stress test CM, it is splitted into Concept 2 and 3 with 50 different classes each, as we aim to verify whether CM can differentiate them successfully. Since MNIST datasets are easy to learn, we mix the three MNIST datasets together to make it more difficult. The training, test, and validation split of the data follows 7:2:1. 

\begin{table}[h] 
\centering 
\caption{Dataset details for each concept} 
\resizebox{1\textwidth}{!}{
\begin{tabular}{*6c}
\toprule
Concept & \shortstack{1} & \shortstack{2}  & \shortstack{3} & \shortstack{4} & \shortstack{5}  \\ 
\midrule
Dataset  &  SVHN  &  FaceScrub0  & FaceScrub1 &  MNIST, Fashion-MNIST, Not-MNIST  & TrafficSigns \\ 
No. Classes  &  10  &  50  & 50 & 30 & 43\\ 
No. Samples & 88300  & 9898    & 9899 &  138197 & 45956 \\ 
\bottomrule
\end{tabular}
}
\label{table_data}  
\end{table}

\subsection{Model}
\label{app:model}

To compare with the baseline fairly, we use the same CNN-based image classification model as~\citep{yoon2021federated}. We believe this model is ideal in size to learn from the dataset.
This model uses two convolutional layers and three dense layers to classify an image input (32*32*3) into one of the 183 classes. The two convolutional layers have 20 and 50 channels, with 5 by 5 filters, stride of 1, and ReLU activation. A 3 by 3 max pooling with stride of 2 follows them. Then, the flattened tensor is fed into three dense layers of 800, 500 and 183 neurons respectively, with ReLU and Softmax activation.

As it is common practice in class-incremental CL, the model follows the single-head evaluation setup~\citep{shim2021online, mai2021supervised, mai2022online}, where it has one output head to classify all labels. This setup is ideal for clients with constrained resource capacity in FL, because the clients do not have to spend computation resources on expanding or selecting the output head. 
Let us note that using the total number of labels as the model output size does not mean we have to know the entire label space or even the label space size, because we can use any output size not smaller than the upper bound of the number of labels. It makes no difference in the training and testing accuracy when experimenting with a given dataset, because the weights associated with unencountered output neurons will not be updated in the backward pass. 

\subsection{Experimental Settings}
\label{app:settings}

We implement CM with TensorFlow and scikit-learn. The experiments are conducted on a Ubuntu Linux cluster (Intel(R) Xeon(R) CPU E5-2680 v4 @ 2.40GHz with 512GB memory, 2 NVIDIA P100-SXM2 GPUs with 16GB total memory).
Non-overlapping chunks from the five concept datasets are further distributed randomly to the clients. At every round, the local data across clients are non-IID. Each client encounters one of the five local concept datasets randomly, and uses a cyclic sliding window of 320 samples in the encountered concept dataset. Unless otherwise specified, CM is tested with 20 clients (all clients participate in each training round), kmean clustering algorithm, FedAvg aggregation algorithm, and Manhattan distance for the server CM algorithm. 
For the local training, we use the Adam optimizer with learning rate of 0.001, weight initializer of HeUniform, and batch size of 64.
Each client maintains and uses a single Adam optimizer throughout the training for all concepts.
We train 15 epochs every round, and use early stopping with the patience value as 3. We test CM with different hyper-parameters, and only present the results with the hyper-parameters that lead to the best results. The system runs 100 rounds of training for each experiment. 

To quantify the overall concept matching accuracy, we evaluate under the class-incremental scenario, and assume the clients are not aware of the concepts and perform the client concept matching every round. 
In task-incremental scenarios, the difference is that the clients do not have to perform concept matching every round, because they know the task IDs and the concept drift due to the transition of different tasks. For the same experimental setting, task-incremental training would achieve the same model performance with lower computation overhead at the clients.

\subsection{IoT Device Setup}
\label{app:iot}
CM is evaluated on a Qualcomm QCS605 IoT device.
This device is equipped with Snapdragon™ QCS605 64-bit ARM v8-compliant octa-core CPU up to 2.5 GHz, Adreno 615 GPU, 8G RAM, and 16 GB eMMC 5.1 onboard storage. We choose it because its specifications are ideal for AIoT cameras and image-based  applications.
The device is connected to Internet through WiFi with a bandwidth of 300 Mbps. 
We re-use CM simulation Python code on the device. Since the device does not support native Linux and its the operating system is rooted Android 8.1, we need to run a Linux distribution on the Linux kernel of Android for easy package management and better support. 
We achieve this goal with two open source projects: termux-app and ubuntu-in-termux. Termux-app is an Android application for terminal application and Linux environment. It provides some basic Linux commands and packages, but is not on par with a mature Linux distribution, such as Ubuntu. Ubuntu-in-termux bridges the gap. Through it, we are able to install well-maintained Python environments and libraries to execute training on-device. The Python training process can be observed directly under adb shell, which does not include the overhead of Termux-app or Ubuntu-in-termux.

\subsection{Parameters for Clustering Algorithms}
\label{app:para}
Kmean, agglomerative, and BIRCH require the number of clusters as a parameter. Unless otherwise specified, we set this parameter to be 5.  
DBSCAN and OPTICS require some threshold values tuned for the data as parameters instead of the number of clusters. We adhere to the convention when selecting their parameters.
For DBSCAN, there are two main parameters: $min\_samples$ and $\epsilon$.
$min\_samples$ is the fewest number of points required to form a cluster. We adjust it to be 3, which is a lower value than the average number of clients per concept (20/5). 
$\epsilon$ is the maximum distance between two points while the two points can still belong to the same cluster. 
To choose $\epsilon$ , we firstly calculate the average distance between each point (the model weights of a client) and its 3 ($min\_samples$) nearest neighbors, and then we sort distance values in the ascending order and plot them. We choose $\epsilon$ to be 20 as the point of maximum curvature in the plot.
Similarly, we set $min\_samples$ parameter to be 3 for OPTICS.
The other parameters for these clustering algorithms are the default values in scikit-learn.

\subsection{Additional Results}
\label{app:add}

\textbf{Learning curves for scalability.}
Figure~\ref{scale_client} shows the learning curves as the number of clients increases. The results show that CM can always learn smoothly. Figure~\ref{scale_size} demonstrates CM can also learn smoothly with 20 clients as the model size increases or decreases 20\%. 
To stress test CM, we further investigate the model performance when training 80 clients with the network size increased and decreased 20\%. The learning curves in Figure~\ref{scale_80client} demonstrate CM's smooth learning progress, as it achieves 94.3\% and 94.9\% average accuracy, respectively. 

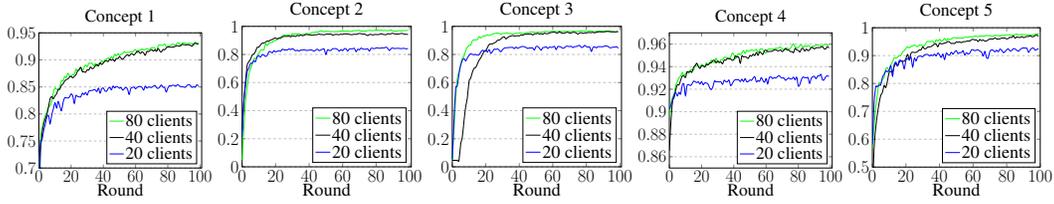
\begin{figure}[t!]
\centering
\begin{minipage}[b]{0.2\textwidth}
\resizebox{1\textwidth}{!}{%
\begin{tikzpicture}
\begin{axis}[
    title={Concept 1},
    xlabel={Round},
    ylabel near ticks,
    xmin=0, xmax=101,
    ymin=0.7, ymax=0.95,
    legend pos=south east,
    ymajorgrids=true,
    grid style=dashed,
]
\addplot[color=green,smooth,mark=None,] table [x=round, y=1_40, col sep=comma, mark=none, smooth] {data/acc_scale.csv};
\addlegendentry{80 clients}
\addplot[color=black,smooth,mark=None,] table [x=round, y=1_80, col sep=comma, mark=none, smooth] {data/acc_scale.csv};
\addlegendentry{40 clients}
\addplot[color=blue,smooth,mark=None,] table [x=round, y=c1_cm, col sep=comma, mark=none, smooth] {data/acc_round.csv};
\addlegendentry{20 clients}

\end{axis}
\end{tikzpicture}}
\end{minipage}%
   \hfill
   \begin{minipage}[b]{0.2\textwidth}
\resizebox{1\textwidth}{!}{%
\begin{tikzpicture}
\begin{axis}[
    title={Concept 2},
    xlabel={Round},
    ylabel near ticks,
    xmin=0, xmax=101,
    ymin=0., ymax=1,
    legend pos=south east,
    ymajorgrids=true,
    grid style=dashed,
]

\addplot[color=green,smooth,mark=None,] table [x=round, y=2_40, col sep=comma, mark=none, smooth] {data/acc_scale.csv};
\addlegendentry{80 clients}
\addplot[color=black,smooth,mark=None,] table [x=round, y=2_80, col sep=comma, mark=none, smooth] {data/acc_scale.csv};
\addlegendentry{40 clients}
\addplot[color=blue,smooth,mark=None,] table [x=round, y=c2_cm, col sep=comma, mark=none, smooth] {data/acc_round.csv};
\addlegendentry{20 clients}

\end{axis}
\end{tikzpicture}}
\end{minipage}%
   \hfill\begin{minipage}[b]{0.2\textwidth}
\resizebox{1\textwidth}{!}{%
\begin{tikzpicture}
\begin{axis}[
    title={Concept 3},
    xlabel={Round},
    ylabel near ticks,
    xmin=0, xmax=101,
    ymin=0., ymax=1,
    legend pos=south east,
    ymajorgrids=true,
    grid style=dashed,
]
\addplot[color=green,smooth,mark=None,] table [x=round, y=3_80, col sep=comma, mark=none, smooth] {data/acc_scale.csv};
\addlegendentry{80 clients}
\addplot[color=black,smooth,mark=None,] table [x=round, y=3_40, col sep=comma, mark=none, smooth] {data/acc_scale.csv};
\addlegendentry{40 clients}
\addplot[color=blue,smooth,mark=None,] table [x=round, y=c3_cm, col sep=comma, mark=none, smooth] {data/acc_round.csv};
\addlegendentry{20 clients}

\end{axis}
\end{tikzpicture}}
\end{minipage}%
   \hfill\begin{minipage}[b]{0.2\textwidth}
\resizebox{1\textwidth}{!}{%
\begin{tikzpicture}
\begin{axis}[
    title={Concept 4},
    xlabel={Round},
    ylabel near ticks,
    xmin=0, xmax=101,
    ymin=0.85, ymax=0.97,
    legend pos=south east,
    ymajorgrids=true,
    grid style=dashed,
]
\addplot[color=green,smooth,mark=None,] table [x=round, y=4_80, col sep=comma, mark=none, smooth] {data/acc_scale.csv};
\addlegendentry{80 clients}
\addplot[color=black,smooth,mark=None,] table [x=round, y=4_40, col sep=comma, mark=none, smooth] {data/acc_scale.csv};
\addlegendentry{40 clients}
\addplot[color=blue,smooth,mark=None,] table [x=round, y=c4_cm, col sep=comma, mark=none, smooth] {data/acc_round.csv};
\addlegendentry{20 clients}

\end{axis}
\end{tikzpicture}}
\end{minipage}%
   \hfill
  \begin{minipage}[b]{0.2\textwidth}
\resizebox{1\textwidth}{!}{%
\begin{tikzpicture}
\begin{axis}[
    title={Concept 5},
    xlabel={Round},
    ylabel near ticks,
    xmin=0, xmax=100,
    ymin=0.5, ymax=1,
    legend pos=south east,
    ymajorgrids=true,
    grid style=dashed,
]

\addplot[color=green,smooth,mark=None,] table [x=round, y=5_40, col sep=comma, mark=none, smooth] {data/acc_scale.csv};
\addlegendentry{80 clients}
\addplot[color=black,smooth,mark=None,] table [x=round, y=5_80, col sep=comma, mark=none, smooth] {data/acc_scale.csv};
\addlegendentry{40 clients}
\addplot[color=blue,smooth,mark=None,] table [x=round, y=c5_cm, col sep=comma, mark=none, smooth] {data/acc_round.csv};
\addlegendentry{20 clients}
\end{axis}
\end{tikzpicture}}
\end{minipage}%
\caption{Test set accuracy vs. communication rounds as number of clients increases}
\label{scale_client}
\end{figure}

\begin{figure}[t!]
\centering
\begin{minipage}[b]{0.2\textwidth}
\resizebox{1\textwidth}{!}{%
\begin{tikzpicture}
\begin{axis}[
    title={Concept 1},
    xlabel={Round},
    ylabel near ticks,
    xmin=0, xmax=101,
    ymin=0.7, ymax=0.87,
    legend pos=south east,
    ymajorgrids=true,
    grid style=dashed,
]
\addplot[color=green,smooth,mark=None,] table [x=round, y=1_l, col sep=comma, mark=none, smooth] {data/acc_size.csv};
\addlegendentry{+20\%}
\addplot[color=blue,smooth,mark=None,] table [x=round, y=c1_cm, col sep=comma, mark=none, smooth] {data/acc_round.csv};
\addlegendentry{original}
\addplot[color=black,smooth,mark=None,] table [x=round, y=1_s, col sep=comma, mark=none, smooth] {data/acc_size.csv};
\addlegendentry{-20\%}

\end{axis}
\end{tikzpicture}}
\end{minipage}%
   \hfill
   \begin{minipage}[b]{0.2\textwidth}
\resizebox{1\textwidth}{!}{%
\begin{tikzpicture}
\begin{axis}[
    title={Concept 2},
    xlabel={Round},
    ylabel near ticks,
    xmin=0, xmax=101,
    ymin=0.4, ymax=0.87,
    legend pos=south east,
    ymajorgrids=true,
    grid style=dashed,
]
\addplot[color=green,smooth,mark=None,] table [x=round, y=2_l, col sep=comma, mark=none, smooth] {data/acc_size.csv};
\addlegendentry{+20\%}
\addplot[color=blue,smooth,mark=None,] table [x=round, y=c2_cm, col sep=comma, mark=none, smooth] {data/acc_round.csv};
\addlegendentry{original}
\addplot[color=black,smooth,mark=None,] table [x=round, y=2_s, col sep=comma, mark=none, smooth] {data/acc_size.csv};
\addlegendentry{-20\%}

\end{axis}
\end{tikzpicture}}
\end{minipage}%
   \hfill\begin{minipage}[b]{0.2\textwidth}
\resizebox{1\textwidth}{!}{%
\begin{tikzpicture}
\begin{axis}[
    title={Concept 3},
    xlabel={Round},
    ylabel near ticks,
    xmin=0, xmax=101,
    ymin=0.2, ymax=0.89,
    legend pos=south east,
    ymajorgrids=true,
    grid style=dashed,
]
\addplot[color=green,smooth,mark=None,] table [x=round, y=3_l, col sep=comma, mark=none, smooth] {data/acc_size.csv};
\addlegendentry{+20\%}
\addplot[color=blue,smooth,mark=None,] table [x=round, y=c3_cm, col sep=comma, mark=none, smooth] {data/acc_round.csv};
\addlegendentry{original}
\addplot[color=black,smooth,mark=None,] table [x=round, y=3_s, col sep=comma, mark=none, smooth] {data/acc_size.csv};
\addlegendentry{-20\%}

\end{axis}
\end{tikzpicture}}
\end{minipage}%
   \hfill\begin{minipage}[b]{0.2\textwidth}
\resizebox{1\textwidth}{!}{%
\begin{tikzpicture}
\begin{axis}[
    title={Concept 4},
    xlabel={Round},
    ylabel near ticks,
    xmin=0, xmax=101,
    ymin=0.85, ymax=0.935,
    legend pos=south east,
    ymajorgrids=true,
    grid style=dashed,
]
\addplot[color=green,smooth,mark=None,] table [x=round, y=4_l, col sep=comma, mark=none, smooth] {data/acc_size.csv};
\addlegendentry{+20\%}
\addplot[color=blue,smooth,mark=None,] table [x=round, y=c4_cm, col sep=comma, mark=none, smooth] {data/acc_round.csv};
\addlegendentry{original}
\addplot[color=black,smooth,mark=None,] table [x=round, y=4_s, col sep=comma, mark=none, smooth] {data/acc_size.csv};
\addlegendentry{-20\%}

\end{axis}
\end{tikzpicture}}
\end{minipage}%
   \hfill
  \begin{minipage}[b]{0.2\textwidth}
\resizebox{1\textwidth}{!}{%
\begin{tikzpicture}
\begin{axis}[
    title={Concept 5},
    xlabel={Round},
    ylabel near ticks,
    xmin=0, xmax=100,
    ymin=0.5, ymax=0.94,
    legend pos=south east,
    ymajorgrids=true,
    grid style=dashed,
]
\addplot[color=green,smooth,mark=None,] table [x=round, y=5_l, col sep=comma, mark=none, smooth] {data/acc_size.csv};
\addlegendentry{+20\%}
\addplot[color=blue,smooth,mark=None,] table [x=round, y=c5_cm, col sep=comma, mark=none, smooth] {data/acc_round.csv};
\addlegendentry{original}
\addplot[color=black,smooth,mark=None,] table [x=round, y=5_s, col sep=comma, mark=none, smooth] {data/acc_size.csv};
\addlegendentry{-20\%}

\end{axis}
\end{tikzpicture}}
\end{minipage}%
\caption{Test set accuracy vs. communication rounds for training 20 clients with different model size}
\label{scale_size}
\end{figure}

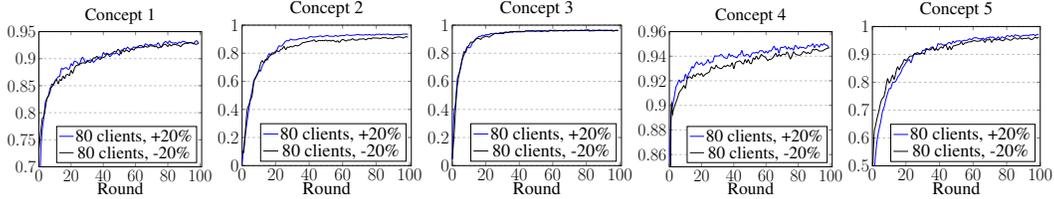
\begin{figure}[t!]
\centering
\begin{minipage}[b]{0.2\textwidth}
\resizebox{1\textwidth}{!}{%
\begin{tikzpicture}
\begin{axis}[
    title={Concept 1},
    xlabel={Round},
    ylabel near ticks,
    xmin=0, xmax=101,
    ymin=0.7, ymax=0.95,
    legend pos=south east,
    ymajorgrids=true,
    grid style=dashed,
]

\addplot[color=blue,smooth,mark=None,] table [x=round, y=c1, col sep=comma, mark=none, smooth] {data/80_large.csv};
\addlegendentry{80 clients, +20\%}
\addplot[color=black,smooth,mark=None,] table [x=round, y=c1, col sep=comma, mark=none, smooth] {data/80_small.csv};
\addlegendentry{80 clients, -20\%}

\end{axis}
\end{tikzpicture}}
\end{minipage}%
   \hfill
   \begin{minipage}[b]{0.2\textwidth}
\resizebox{1\textwidth}{!}{%
\begin{tikzpicture}
\begin{axis}[
    title={Concept 2},
    xlabel={Round},
    ylabel near ticks,
    xmin=0, xmax=101,
    ymin=0., ymax=1,
    legend pos=south east,
    ymajorgrids=true,
    grid style=dashed,
]


\addplot[color=blue,smooth,mark=None,] table [x=round, y=c2, col sep=comma, mark=none, smooth] {data/80_large.csv};
\addlegendentry{80 clients, +20\%}
\addplot[color=black,smooth,mark=None,] table [x=round, y=c2, col sep=comma, mark=none, smooth] {data/80_small.csv};
\addlegendentry{80 clients, -20\%}

\end{axis}
\end{tikzpicture}}
\end{minipage}%
   \hfill\begin{minipage}[b]{0.2\textwidth}
\resizebox{1\textwidth}{!}{%
\begin{tikzpicture}
\begin{axis}[
    title={Concept 3},
    xlabel={Round},
    ylabel near ticks,
    xmin=0, xmax=101,
    ymin=0., ymax=1,
    legend pos=south east,
    ymajorgrids=true,
    grid style=dashed,
]
\addplot[color=blue,smooth,mark=None,] table [x=round, y=c3, col sep=comma, mark=none, smooth] {data/80_large.csv};
\addlegendentry{80 clients, +20\%}
\addplot[color=black,smooth,mark=None,] table [x=round, y=c3, col sep=comma, mark=none, smooth] {data/80_small.csv};
\addlegendentry{80 clients, -20\%}

\end{axis}
\end{tikzpicture}}
\end{minipage}%
   \hfill\begin{minipage}[b]{0.2\textwidth}
\resizebox{1\textwidth}{!}{%
\begin{tikzpicture}
\begin{axis}[
    title={Concept 4},
    xlabel={Round},
    ylabel near ticks,
    xmin=0, xmax=101,
    ymin=0.85, ymax=0.96,
    legend pos=south east,
    ymajorgrids=true,
    grid style=dashed,
]
\addplot[color=blue,smooth,mark=None,] table [x=round, y=c4, col sep=comma, mark=none, smooth] {data/80_small.csv};
\addlegendentry{80 clients, +20\%}
\addplot[color=black,smooth,mark=None,] table [x=round, y=c4, col sep=comma, mark=none, smooth] {data/80_large.csv};
\addlegendentry{80 clients, -20\%}

\end{axis}
\end{tikzpicture}}
\end{minipage}%
   \hfill
  \begin{minipage}[b]{0.2\textwidth}
\resizebox{1\textwidth}{!}{%
\begin{tikzpicture}
\begin{axis}[
    title={Concept 5},
    xlabel={Round},
    ylabel near ticks,
    xmin=0, xmax=100,
    ymin=0.5, ymax=1,
    legend pos=south east,
    ymajorgrids=true,
    grid style=dashed,
]

\addplot[color=blue,smooth,mark=None,] table [x=round, y=c5, col sep=comma, mark=none, smooth] {data/80_large.csv};
\addlegendentry{80 clients, +20\%}
\addplot[color=black,smooth,mark=None,] table [x=round, y=c5, col sep=comma, mark=none, smooth] {data/80_small.csv};
\addlegendentry{80 clients, -20\%}

\end{axis}
\end{tikzpicture}}
\end{minipage}%
\caption{Test set accuracy vs. communication rounds for training 80 clients with different model size}
\label{scale_80client}
\end{figure}

\textbf{Resilience to number of concepts different from the ground truth.} 
CM requires an estimated number of concepts configured in the bootstrapping phrase. In case the system administrator fails to estimate the number of concept correctly, we experiment with the number of concepts from 3 to 7 under the same experimental settings (i.e., 5 concepts is the ground truth). As shown in Figure~\ref{res}, when the configured number of concepts is higher (6 and 7) than the ground truth, 5 concept models (out of 6 or 7) learn the corresponding concepts smoothly. The extra concept models in the system do not effect the smooth learning progress, and the average model accuracy achieves 90.5\% and 90.0\% respectively. When the number of concepts is smaller (4 and 3) than the ground truth, the system starts to treat the similar concepts (e.g., two FaceScrub concepts) as one. Although the model accuracy on the affected concepts (2, 3, and 5) exhibit minor fluctuations during the training, the smooth learning progress for the other concepts (1 and 4) is not affected. Nevertheless, the average model accuracy achieves 89.5\% and 88.9\% respectively, and beats vanilla FL (86.7\%). These results demonstrate CM has good resilience in terms of the number of concepts configured in the system. Furthermore, they suggest it is better if system administrators over-estimate the number of concepts, because the performance is still very good in this case.

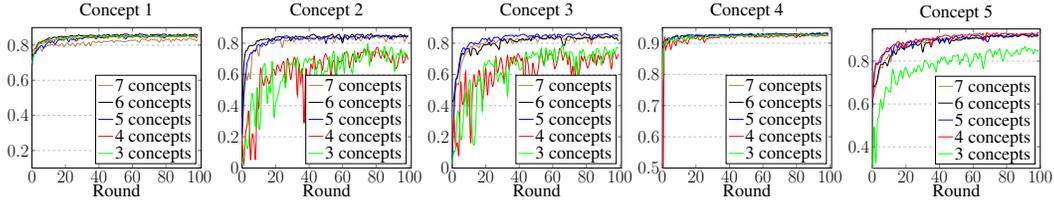
\begin{figure}[t!]
\centering
\begin{minipage}[b]{0.2\textwidth}
\resizebox{1\textwidth}{!}{%
\begin{tikzpicture}
\begin{axis}[
    title={Concept 1},
    xlabel={Round},
    ylabel near ticks,
    xmin=0, xmax=101,
    ymin=0.1, ymax=0.9,
    legend pos=south east,
    ymajorgrids=true,
    grid style=dashed,
]
\addplot[color=brown,smooth,mark=None,] table [x=round, y=c1, col sep=comma, mark=none, smooth] {data/acc_7_5.csv};
\addlegendentry{7 concepts}
\addplot[color=black,smooth,mark=None,] table [x=round, y=c1, col sep=comma, mark=none, smooth] {data/acc_6a_5.csv};
\addlegendentry{6 concepts}
\addplot[color=blue,smooth,mark=None,] table [x=round, y=c1_cm, col sep=comma, mark=none, smooth] {data/acc_round.csv};
\addlegendentry{5 concepts}
\addplot[color=red,smooth,mark=None,] table [x=round, y=c1, col sep=comma, mark=none, smooth] {data/acc_4_5.csv};
\addlegendentry{4 concepts}
\addplot[color=green,smooth,mark=None,] table [x=round, y=c1, col sep=comma, mark=none, smooth] {data/acc_3_5.csv};
\addlegendentry{3 concepts}
\end{axis}
\end{tikzpicture}}
\end{minipage}%
   \hfill
   \begin{minipage}[b]{0.2\textwidth}
\resizebox{1\textwidth}{!}{%
\begin{tikzpicture}
\begin{axis}[
    title={Concept 2},
    xlabel={Round},
    ylabel near ticks,
    xmin=0, xmax=101,
    ymin=0., ymax=0.9,
    legend pos=south east,
    ymajorgrids=true,
    grid style=dashed,
]
\addplot[color=brown,smooth,mark=None,] table [x=round, y=c2, col sep=comma, mark=none, smooth] {data/acc_7_5.csv};
\addlegendentry{7 concepts}
\addplot[color=black,smooth,mark=None,] table [x=round, y=c2, col sep=comma, mark=none, smooth] {data/acc_6a_5.csv};
\addlegendentry{6 concepts}
\addplot[color=blue,smooth,mark=None,] table [x=round, y=c2_cm, col sep=comma, mark=none, smooth] {data/acc_round.csv};
\addlegendentry{5 concepts}
\addplot[color=red,smooth,mark=None,] table [x=round, y=c2, col sep=comma, mark=none, smooth] {data/acc_4_5.csv};
\addlegendentry{4 concepts}
\addplot[color=green,smooth,mark=None,] table [x=round, y=c2, col sep=comma, mark=none, smooth] {data/acc_3_5.csv};
\addlegendentry{3 concepts}
\end{axis}
\end{tikzpicture}}
\end{minipage}%
   \hfill\begin{minipage}[b]{0.2\textwidth}
\resizebox{1\textwidth}{!}{%
\begin{tikzpicture}
\begin{axis}[
    title={Concept 3},
    xlabel={Round},
    ylabel near ticks,
    xmin=0, xmax=101,
    ymin=0., ymax=0.9,
    legend pos=south east,
    ymajorgrids=true,
    grid style=dashed,
]
\addplot[color=brown,smooth,mark=None,] table [x=round, y=c3, col sep=comma, mark=none, smooth] {data/acc_7_5.csv};
\addlegendentry{7 concepts}
\addplot[color=black,smooth,mark=None,] table [x=round, y=c3, col sep=comma, mark=none, smooth] {data/acc_6a_5.csv};
\addlegendentry{6 concepts}
\addplot[color=blue,smooth,mark=None,] table [x=round, y=c3_cm, col sep=comma, mark=none, smooth] {data/acc_round.csv};
\addlegendentry{5 concepts}
\addplot[color=red,smooth,mark=None,] table [x=round, y=c3, col sep=comma, mark=none, smooth] {data/acc_4_5.csv};
\addlegendentry{4 concepts}
\addplot[color=green,smooth,mark=None,] table [x=round, y=c3, col sep=comma, mark=none, smooth] {data/acc_3_5.csv};
\addlegendentry{3 concepts}
\end{axis}
\end{tikzpicture}}
\end{minipage}%
   \hfill\begin{minipage}[b]{0.2\textwidth}
\resizebox{1\textwidth}{!}{%
\begin{tikzpicture}
\begin{axis}[
    title={Concept 4},
    xlabel={Round},
    ylabel near ticks,
    xmin=0, xmax=101,
    ymin=0.5, ymax=0.95,
    legend pos=south east,
    ymajorgrids=true,
    grid style=dashed,
]
\addplot[color=brown,smooth,mark=None,] table [x=round, y=c4, col sep=comma, mark=none, smooth] {data/acc_7_5.csv};
\addlegendentry{7 concepts}
\addplot[color=black,smooth,mark=None,] table [x=round, y=c4, col sep=comma, mark=none, smooth] {data/acc_6a_5.csv};
\addlegendentry{6 concepts}
\addplot[color=blue,smooth,mark=None,] table [x=round, y=c4_cm, col sep=comma, mark=none, smooth] {data/acc_round.csv};
\addlegendentry{5 concepts}
\addplot[color=red,smooth,mark=None,] table [x=round, y=c4, col sep=comma, mark=none, smooth] {data/acc_4_5.csv};
\addlegendentry{4 concepts}
\addplot[color=green,smooth,mark=None,] table [x=round, y=c4, col sep=comma, mark=none, smooth] {data/acc_3_5.csv};
\addlegendentry{3 concepts}
\end{axis}
\end{tikzpicture}}
\end{minipage}%
   \hfill
  \begin{minipage}[b]{0.2\textwidth}
\resizebox{1\textwidth}{!}{%
\begin{tikzpicture}
\begin{axis}[
    title={Concept 5},
    xlabel={Round},
    ylabel near ticks,
    xmin=0, xmax=100,
    ymin=0.3, ymax=0.95,
    legend pos=south east,
    ymajorgrids=true,
    grid style=dashed,
]
\addplot[color=brown,smooth,mark=None,] table [x=round, y=c5, col sep=comma, mark=none, smooth] {data/acc_7_5.csv};
\addlegendentry{7 concepts}
\addplot[color=black,smooth,mark=None,] table [x=round, y=c5, col sep=comma, mark=none, smooth] {data/acc_6a_5.csv};
\addlegendentry{6 concepts}
\addplot[color=blue,smooth,mark=None,] table [x=round, y=c5_cm, col sep=comma, mark=none, smooth] {data/acc_round.csv};
\addlegendentry{5 concepts}
\addplot[color=red,smooth,mark=None,] table [x=round, y=c5, col sep=comma, mark=none, smooth] {data/acc_4_5.csv};
\addlegendentry{4 concepts}
\addplot[color=green,smooth,mark=None,] table [x=round, y=c5, col sep=comma, mark=none, smooth] {data/acc_3_5.csv};
\addlegendentry{3 concepts}
\end{axis}
\end{tikzpicture}}
\end{minipage}%
\caption{Test set accuracy with different number of concepts over communication rounds}
\label{res}
\end{figure}

\begin{table}[t!] 
\centering 
\caption{Client operation time (second) on real IoT device} 
\resizebox{0.75\textwidth}{!}{
\begin{tabular}{*6c}
\toprule
 & Receiving model & Concept Matching  & Training  & \shortstack{Sending model}  & \shortstack{Total}  \\ 
  \midrule
Vanilla FL &  0.67 & N/A  & 85.27 & 0.67 & 86.61 \\ 
CM  &  3.35  &  8.52  & 85.27 & 0.67 & 97.81   \\ 
\bottomrule
\end{tabular}}
\label{table_oh}  
\end{table}

\textbf{Client operation overhead.}
Designed for mobile or IoT devices, CM is evaluated on a real IoT device in terms of the client end-to-end operation time. 
Compared with vanilla FL, the client operation overhead comes from receiving multiple concept models from the server and the client concept matching. 
Table~\ref{table_oh} shows the breakdown of client operation time on the Qualcomm QCS605 IoT device in one round.
We assume the worst-case scenario that the clients do not know the concepts, and perform concept matching every round. 
Compared with a multi-epoch training process over the entire experienced data, the concept matching can be achieved by testing only a portion of the data. 
The communication time is calculated as sending or receiving the model(s) of size 25.2MB over 300 Mbps WiFi network. 
The total client end-to-end operation time in one round is 97.81 seconds, which is feasible for a real-world deployment. 
Overall, the total CM operation time has a low overhead (11\%) over vanilla FL operations on the IoT device. We believe the improvement in performance achieved by CM is worth this overhead cost.

\end{document}